\newtheorem{sectioncounter}{}[section]
\newtheorem{definition}[sectioncounter]{Definition}
\newtheorem{example}[sectioncounter]{Example}
\newtheorem{remark}[sectioncounter]{Remark}
\newtheorem{theorem}[sectioncounter]{Theorem}
\newtheorem{lemma}[sectioncounter]{Lemma}
\newtheorem{proposition}[sectioncounter]{Proposition}
\newtheorem{corollary}[sectioncounter]{Corollary}
\newcommand{\conc}[1]{\ensuremath{\mathsf{#1}}}
\newtheorem*{proof}{Proof}
\DeclareMathOperator{\naf}{\textbf{not}}
\crefname{section}{Section}{sections}
\crefname{subsection}{Section}{sections}
\crefname{subsubsection}{Section}{sections}
\crefname{definition}{Definition}{definitions}
\crefname{theorem}{Theorem}{theorems}
\crefname{lemma}{Lemma}{lemmata}
\crefname{propositions}{Proposition}{propositions}
\crefname{corollary}{Corollary}{corollaries}
\crefname{property}{Property}{properties}
\crefname{example}{Example}{examples}
\crefname{problem}{Problem}{problems}
\crefname{notation}{Notation}{Notation}
\crefname{algo}{Algorithm}{algorithms}
\title{Generating Ontologies from Templates: A Rule-Based Approach for Capturing Regularity}
\author{Henrik Forssell \and Christian Kindermann \and Daniel P. Lupp
  \and Uli Sattler \and 
Evgenij Thorstensen}
\date{Technical Report}
\newcommand{\eval}{\mathsf{eval}}
\newcommand{\expand}{\mathsf{Exp}}
\newcommand{\oexpand}{\mathsf{1Exp}}
\newcommand{\lang}{\mathcal{L}}
\newcommand{\ont}{\mathcal{O}}
\newcommand{\isa}{\sqsubseteq}
\newcommand{\E}{\exists}
\newcommand{\A}{\forall}
\begin{document}

\maketitle
\tableofcontents
\newpage
\begin{abstract}
  We present a second-order language that can be used to succinctly
  specify ontologies in a consistent and transparent manner. This
  language is based on ontology templates (OTTR), a framework for
  capturing recurring patterns of axioms in ontological modelling. The
  language, and our results are independent of any specific DL.

  We define the language and its semantics, including the case of
  negation-as-failure, investigate reasoning over ontologies specified
  using our language, and show results about the decidability of
  useful reasoning tasks about the language itself. We also state and
  discuss some open problems that we believe to be of interest.
\end{abstract}

\section{Introduction}
\label{introduction}
The phenomenon of frequently occurring structures in ontologies engineering (OE) has received attention from a variety of angles. One of the first accounts is given in \cite{DBLP:conf/ekaw/Clark08}, where repeated versions of general conceptual models are identified. Similar observations gave rise to the notion of \textit{Ontology Design Patterns} (ODP) as abstract descriptions of best practices in OE \cite{DBLP:conf/semweb/Gangemi05,DBLP:conf/iceis/BlomqvistS05,DBLP:books/ios/HGJKP2016}. Another view, emphasizing common ontological distinctions, led to the emergence of \textit{Upper Ontologies} which aim to categorize general ideas shareable across different domains \cite{DBLP:conf/ijcai/GangemiGMO01}. Orthogonal to such conceptual patterns, the existence of syntactic regularities in ontologies has been noted and some aspects of their nature have been analyzed \cite{DBLP:conf/owled/MikroyannidiMIS12,DBLP:conf/semweb/MikroyannidiISR11,DBLP:conf/ekaw/MikroyannidiQTFSP14}.

In this paper, we propose a new language that allows expressing
patterns of repeated structures in ontologies. This language is
rule-based and has both a model-theoretic and a fixpoint semantics, for which we show that they coincide. In contrast to other rule languages ``on top of'' DLs, in this language,  firing a rule results in the addition of TBox and/or ABox axioms, with the goal to 
succinctly describe ontologies, thereby making them more readable and maintainable. 




Given that DL ontologies are sets of axioms, an ontology provides no means to arrange its axioms in a convenient manner for ontology engineers. In particular, it is not possible to group conceptually related axioms or indicate interdependencies between axioms. While ontology editors such as Prot\'{e}g\'{e}\footnote{https://protege.stanford.edu/} display an ontology through a hierarchy of its entities, conceptual interdependencies between axioms are hidden and the underlying structural design of an ontology remains obfuscated.

\begin{example}
    \label{introduction:example:simplePattern}
    Consider the ontology
\begin{align}
            \ont_1 = \{\conc{Jaguar} &\isa \conc{Animal},&\conc{Jaguar} &\isa \A \conc{hasChild}.\conc{Jaguar},\\
            \conc{Tiger} &\isa \conc{Animal},&\conc{Tiger} &\isa \A \conc{hasChild}.\conc{Tiger},\\
            \conc{Lion} &\isa \conc{Animal},&\conc{Lion} &\isa \A \conc{hasChild}.\conc{Lion}\}
\end{align}
    Then, an ontology editor will group the entities $\conc{Jaguar}, \conc{Tiger}$ and $\conc{Lion}$ under $\conc{Animal}$ according to their class hierarchy.


However, $\ont_1$ contains no indication that every subclass $X$ of $\conc{Animal}$ can have only children of the same class $X$. Assume this regularity is no coincidence but a desired pattern that should hold for any subclass of $\conc{Animal}$. Currently, ontology engineers have no means of expressing or enforcing such a pattern other than dealing with the ontology as a whole, inspecting all axioms separately, and making necessary changes manually.
\end{example}

Expressing patterns such as in Example
\ref{introduction:example:simplePattern} explicitly has a potential to
reveal some aspects of the intentions for the design of an ontology.

\begin{example}
    \label{introduction:example:generator}
    Consider the ontology
    \begin{align*}
        & \ont_2 = \{\conc{Jaguar} \isa \conc{Animal},\ 
            \conc{Tiger} \isa \conc{Animal},\ 
        \conc{Lion} \isa \conc{Animal}\}
    \end{align*}
    In addition, consider the rule $$g \colon \underbrace{\{?X \isa \conc{Animal}\}}_{\text{Body}} \rightarrow \underbrace{\{?X \isa \A \conc{hasChild}.?X \}}_{\text{Head}},$$ where $?X$ is a variable. We can interpret the body of this rule as a query which, when evaluated over the ontology $\ont_2$, returns substitutions for  $?X$. These substitutions  can then be used to instantiate the  axioms in the head of the rule. Firing the above rule over $\ont_2$ would  add all those resulting axioms to $\ont_2$, thereby reconstructing $\ont_1$ from Example \ref{introduction:example:simplePattern}.

\end{example}

In the following, we will call such rules \textit{generators}. The possible benefits of  generators are threefold.
Firstly, $\ont_2$ in combination with $g$ is easier to understand
because $g$ makes a statement about all subconcepts of $\conc{Animal}$
that \textit{the type of an animal determines the type of its
  children}. This is a kind of meta-statement about concepts which a
user of an ontology can usually only learn by inspecting (many) axioms in an ontology.
Secondly, $\ont_2$ in combination with $g$ is easier to maintain and extend compared to $\ont_1$, where a user would have to manually ensure that the meta-statement continues to be satisfied after new concepts have been added.  
Thirdly, conceptual relationships captured in a generator such as $g$ are easy to reuse and can foster interoperability between ontologies in the spirit of ontology design patterns.

We close this section with more elaborate examples to demonstrate the benefits generators such as $g$ can provide.

\subsection{Examples}

\begin{example}[Composition]
    \label{introduction:example:composition}
    Assume we want to model typical roles in groups of social predatory animals. One such a role would be that of a hunter. A challenge for representing such knowledge is that different collective nouns are used for different animals, e.g. a group of lions is called a ``pride'', a group of wild dogs is called a ``pack'', a group of killer whales is called a ``pod'', etc. Therefore, a mechanism that can conveniently iterate over all these group formations would be beneficial.

    Consider the following query $Q_1$:
    
        \begin{align}
            Q_1= \{?X & \isa \conc{Animal}, \label{Q1:animal}\\
                ?X &\isa \E \conc{eats}.\conc{Animal}, \label{Q1:carnivore}\\
            ?X &\isa \E \conc{hunts}.\conc{Animal}, \label{Q1:predator}\\
            ?Y &\isa \conc{SocialGroup}, \label{Q1:group}\\
            ?X &\isa \E \conc{socialisesIn}.?Y, \label{Q1:association}\\
            ?Y &\isa \E \conc{hasMember}.?X,\\
            &\conc{socialisesIn} \equiv \conc{hasMember}^{-} \label{Q1:inverse}\}
        \end{align}
        Lines \ref{Q1:animal}--\ref{Q1:predator} bind the variable $?X$ to a predatory animal. Line \ref{Q1:group} binds the variable $?Y$ to a type of social group and lines \ref{Q1:association}--\ref{Q1:inverse} associate a particular type of animal with its respective social group. Given the bindings for $?X$ and $?Y$ it is straightforward to express that a particular type of predator $?X$ is a hunter in its respective social group, namely: $?Y \isa \E \conc{hasHunter}.?X$. A generator such as in Example~\ref{introduction:example:generator} could capture this relationship:
        $$g_1 \colon Q_1 \rightarrow \{?Y \isa \E \conc{hasHunter}.?X \}$$
\end{example}

\begin{example}[Extension]
    \label{introduction:example:extension}

Extending generator $g_1$ from Example~\ref{introduction:example:composition} to capture more specialised knowledge is straightforward. Consider predatory ants of the family \conc{Formicidae}. These ants generally live in colonies with an elaborate social organisation consisting of workers, drones, queens, etc.

First, we extend query $Q_1$ with the following axioms:
        \begin{align}
            Q_2 = Q_1 \cup \{?X &\isa \conc{Formicidae}, \label{Q2:formicidae}\\
            ?Z &\isa ?Y \label{Q2:subgroupOfGroup}\\
            ?X &\isa \E \conc{socialisesIn}.?Z\}\label{Q2:scope}
        \end{align}
        Axiom \ref{Q2:formicidae} requires $?X$ to bind to a type of $\conc{Formicidae}$, e.g. $?X = \conc{SafariAnt}$. According to query $Q_1$, the variable $?Y$ binds to a general $\conc{SocialGroup}$, e.g. $?Y = \conc{AntColony}$. Then, axiom \ref{Q2:subgroupOfGroup} binds $?Z$ to a more specialised subgroup of a $?Y$. Finally, axiom \ref{Q2:scope} ensures that this subgroup $?Z$ is associated with $?X$. So for $?X = \conc{SafariAnt}$ we get $?Z = \conc{SafariAntColony}$.

        Next, we can specify the generator to add all desired axioms based on matches of query $Q_2$ specialised for ants:
        \begin{align*}
            g_2 \colon Q_2 &\rightarrow\\
            \{?Z &\isa \E \conc{hasHunter}.?X,\\
            ?Z &\isa \E \conc{hasWorker}.?X,\\
        ?Z &\isa  \E \conc{hasDrone}.?X,\\
    ?Z &\isa \E \conc{hasQueen}.?X \}
        \end{align*}
        Note how the body and head of generator $g_1$ from Example~\ref{introduction:example:composition} have been reused and extended only by set unions.
\end{example}

\begin{example}[Negative Guards]
    \label{introduction:example:negativeGuards}
Often, general relationships are subject to exceptions. While most ants hunt and feed cooperatively, there are some genera of ants, e.g. Myrmecia, that do not. Therefore, $g_2$ in Example~\ref{introduction:example:extension} would generate an undesired axiom, namely $\conc{MyrmeciaAntColony} \isa \conc{hasHunter}.\conc{MyrmeciaAnt}$. This motivates guards in the body of generators that may not only specify positive constraints but also negative ones:

\begin{align*}
Q_3 = Q_2 \cup \{\naf~~?X &\isa \conc{MyrmeciaAnt},\\
\naf~~?Z &\isa \conc{MyrmeciaAntColony}\}
\end{align*}
        \begin{align*}
            g_3 \colon Q_3 &\rightarrow\\
            \{?Z &\isa \E \conc{hasHunter}.?X,\\
            ?Z &\isa \E \conc{hasWorker}.?X,\\
        ?Z &\isa  \E \conc{hasDrone}.?X,\\
    ?Z &\isa \E \conc{hasQueen}.?X \}
        \end{align*}

        One might argue that the effect of negative guards could also be achieved by positive guards using negated concepts in DL, i.e. $?X \isa \lnot \conc{MyrmeciaAnt}$ instead of $\naf ?X \isa \conc{MyrmeciaAnt}$. However, this approach would necessitate the introduction of a potentially large number of axioms of type $?X \isa \lnot \conc{MyrmeciaAnt}$ in the given ontology. This can be avoided by using $g_3$.

        Another advantage of negative guards is the possibility to explicitly express default assumptions for lack of better knowledge. An ant colony of a certain genus usually consists of only ants of this genus, e.g.
        
        \begin{equation}
            \conc{SafariAntColony} \isa \A \conc{hasMember}.\conc{SafariAnt}. \label{P5:universal}
    \end{equation}
        
    However, some genera of ants are social parasites that enslave other ant species. In such a case, the default assumption about the homogeneity of an ant colony is wrong and the axiom \ref{P5:universal} should not be added. 
        \begin{align*}
            Q_4 = \{?X &\isa \conc{Ant},\\
            ?Y &\isa \conc{AntColony},\\
            ?Y &\isa \E \conc{hasMember}.?X,\\
            ?Z &\isa \conc{Ant},\\
                ?X &\isa \lnot ?Z,\\
                \naf~~?X &\isa \E \conc{enslaves}.?Z,\\
                \naf~~?Y &\isa \E \conc{hasMember}.?Z,\}
        \end{align*}
        \begin{align*}
            g_4 \colon Q_4 \rightarrow \{?Y \isa \A \conc{hasMember}.?X\}
        \end{align*}

\end{example}

\begin{example}[Recursion]
    Contagious diseases may be transmitted between animals sharing a habitat.
    Overlapping habitats of infected animals may result in a propagation of diseases across habitats.
\begin{figure}[h]
\begin{center}
\includegraphics[width=8cm]{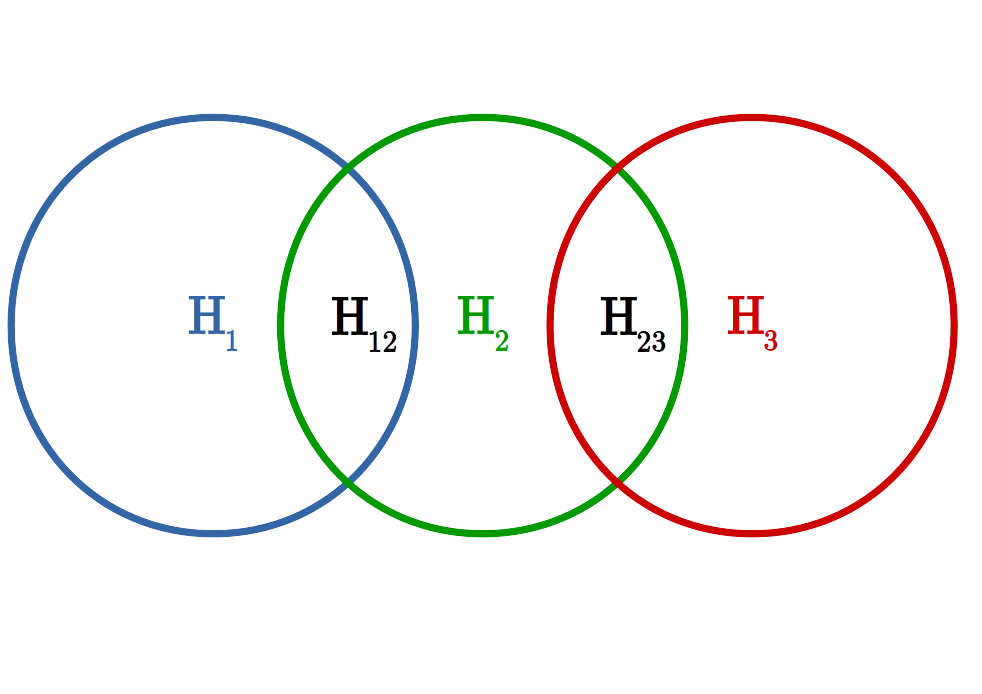}
\caption{Overlapping Habitats}
\label{overlap}
\end{center}
\end{figure}

Assume there is an overlap between habitats $H_1, H_2, H_3$ such that there is no overlap between $H_1$ and $H_3$, $H_{12}$ describes the overlap between $H_1$ and $H_2$, and $H_{23}$ describes the overlap between $H_2$ and $H_3$ (see Figure~\ref{overlap}). Then, a disease infected animal living in $H_1$ may affect an animal in $H_2$ which in turn may affect an animal in $H_3$. Such an iterative process may be captured by repeatedly applying a single generator. 

Consider the following query:

\begin{align}
    Q_5 = \{?X &\isa \conc{Animal},\\
    ?Y &\isa \conc{Animal},\\
    ?D &\isa \conc{ContagiousDisease},\\
    ?H &\isa \conc{Habitat},\\
    ?X &\isa \E \conc{suffersFrom}.?D, \label{Q:recursive:suffering}\\
    ?Y &\isa \A \conc{isSusceptibleTo}.?D, \label{Q:recursive:susceptible}\\
    ?X &\isa \E \conc{livesIn}.?H,\label{Q:recursive:sharedHabitat1}\\
        ?Y &\isa \E \conc{livesIn}.?H\label{Q:recursive:sharedHabitat2}\}
\end{align}

Axioms \ref{Q:recursive:suffering} and \ref{Q:recursive:susceptible} express the requirements for a disease to be transmitted between animals while axioms \ref{Q:recursive:sharedHabitat1} and \ref{Q:recursive:sharedHabitat2} capture the requirement of a shared environment. Using query $Q_5$, we can represent the propagation of a disease between animals across habitats:
\begin{align*}
    g_5 & : Q_5 \rightarrow \{?Y \isa \E \conc{suffersFrom}.?D \}
\end{align*}
Clearly, the generation of an instance of $?Y \isa \E \conc{suffersFrom}.?D$ could yield a new match for $Q_5$ in the body of $g_5$. Therefore, generator $g_5$ has to be applied repeatedly until a fixpoint is reached.

\end{example}
        
\begin{example}[Encapsulation]
    Inspecting the queries $Q_1, Q_2$, and $Q_3$ in Examples~\ref{introduction:example:composition}--\ref{introduction:example:negativeGuards}, it is apparent that different parts in the queries correspond to different conceptual ideas. For example, in query $Q_1$ the axioms can be grouped into ones about predators and others about social groups. Such a grouping would provide valuable information for an ontology engineer to indicate conceptual relationships between certain sets of axioms:
    \begin{equation*}
            \left.
        \begin{aligned}
            \{?X & \isa \conc{Animal},\\
                ?X &\isa \E \conc{eats}.\conc{Animal},\\
            ?X &\isa \E \conc{hunts}.\conc{Animal}\}
        \end{aligned}
    \right\} \text{Predator}
    \end{equation*}
    \begin{equation*}
            \left.
        \begin{aligned}
            \{?Y &\isa \conc{SocialGroup}, \\
            ?X &\isa \E \conc{socialisesIn}.?Y,\\
            ?Y &\isa \E \conc{hasMember}.?X,\\
            &\conc{socialisesIn} \equiv \conc{hasMember}^{-}\}
        \end{aligned}
    \right\} \text{Social Group}
    \end{equation*}
Reasonable ontology templates \cite{OTTR-ISWC18,dl-templates}, OTTR for short,
introduced a framework for indicating such conceptual relationships. A
template is defined as a named ontology with a set of variables. The variables can be instantiated with concept and role expressions to yield a set of valid axioms. Moreover, templates may be composed to give rise to more complex templates. Choosing intention-revealing names for templates and composing appropriately named templates may improve ontology comprehension by making the structural design of an ontology visible.

A template, i.e. a set of axioms with variables, can also be interpreted as a query, asking for concept and role expressions in an existing ontology that match the pattern represented by the template. These expressions can then, in principle, be fed into a different template to produce new axioms. This idea captures conceptual interdependencies between templates or, more generally, axiomatic patterns.

Clearly, it is straightforward to integrate OTTR as part of a preprocessing step into our rule language. This has  not only the potential to foster the reuse of conceptually related set of axioms in an intention-revealing manner, but can also to further improve the maintainability of generators by the principle of information hiding. A change in a template will be propagated automatically to all instances of the use of the template.
\end{example}


\section{Preliminaries}
Let  $N_I$, $N_C$, and $N_R$ be sets of \emph{individual},
\emph{concept}, and \emph{role names}, each containing a distinguished subset of \emph{individual}, \emph{concept},
and \emph{role variables} $V_I$, $V_C$, and $V_R$. A \emph{concept}
(resp. \emph{role}) is either a concept name (resp. role name) or a
concept expression (resp. role expression) built using the usual DL
constructors \cite{DBLP:conf/dlog/2003handbook}. Since we do not distinguish between TBoxes
and ABoxes, an \emph{axiom} is either an assertion of the form
$\conc{C}(\conc{a})$ or $\conc{R}(\conc{a},\conc{b})$ for a
concept $\conc{C}$, role $\conc{R}$, and individual names $\conc{a},\conc{b}$ or an inclusion statement
$\conc{C}\isa \conc{D}$ for concepts or roles $\conc{C}$ and $\conc D$. 
A \emph{theory} is a (possibly infinite) set of axioms, whereas an
\emph{ontology} is a finite set of axioms. 
A set $\lang$ of
individuals, concepts, and roles is called a \emph{language}.

A \emph{template} $T$ is an ontology, and we write $T(V)$ for $V\subseteq
V_I\cup V_C\cup V_R$ the set of variables  occurring in $T$. For the sake of brevity, we occasionally omit the variable set $V$ when it is either clear from context or nonvital to the discussion. Templates can
be \emph{instantiated} by applying a substitution to them. A
\emph{substitution} $\sigma$ is a function that maps individual,
concept, and role variables to individuals, concepts, and roles
respectively. We require that substitutions respect the type of a variable, so that the result of instantiating a template is a well-formed ontology.
%
For $\lang$ a language, an \emph{$\lang$-substitution} is one whose
range is a subset of $\lang$. The \emph{$\lang$-evaluation of $T$
  over $\ont$}, written $\eval(T, \ont, \lang) $, is the set of substitutions
defined as follows:
$$\eval(T, \ont, \lang) = \{ \sigma \text{ an $\lang$-substitution}
\mid \ont \models T\sigma \},$$ 
where $T\sigma$ is the instantiation of $T$ with $\sigma$. 
Furthermore, we define
$\eval(\emptyset,\ont,\lang)$ to be the set of all $\lang$-substitutions.

Finally, we say that an ontology $\ont$ \emph{is weaker than} $\ont'$ if $\ont' \models \ont$, and \emph{strictly weaker} if the reverse does not hold.


\section{Generators and GBoxes}

\label{sec:syntax-semantics}

In this section we define the syntax and semantics of generators and GBoxes and discuss
some examples. 

\begin{definition}
    \label{generator}
  A \emph{generator} $g$ is an expression of the form
  $T_B(V_B)\rightarrow T_H(V_H)$,  for $T_B(V_B), T_H(V_H)$  templates
  with $V_H \subseteq V_B$. $T_B$ and $T_H$ are respectively called
  the \emph{body} and \emph{head} of $g$, and we write $B(g)$ and $H(g)$ to denote them.
\end{definition}

\begin{example}
  \label{ex:animal-generator}
  $g \colon \{?X \isa \conc{Animal}\} \rightarrow \{?X \isa \forall \conc{hasChild}.?X\}$ is a generator, with a single variable $?X$. 
\end{example}

Next, we define the semantics for generators and sets of generators
based on entailment to ensure that generators behave independent of
the syntactic form of an ontology. In this choice we diverge from the work done on OTTR \cite{OTTR-ISWC18}, as OTTR template semantics is defined syntactically.

\begin{definition}
\label{sec:generator:satisfaction}
Let $g \colon T_B(V_B)\rightarrow T_H(V_H)$
 be a generator. 
 A theory $\ont$ \emph{satisfies $g$ wrt.} $\lang$ if, for every $\lang$-substitution $\sigma$ such that $\ont \models T_B\sigma$, we have $\ont \models T_H\sigma$.
\end{definition}

\begin{example}
  Consider the generator $g$ from \Cref{ex:animal-generator}. The theory $\ont_1 = \{ \conc{Turtle} \isa \conc{Mammal}, \conc{Mammal} \isa \conc{Animal}, \conc{Turtle} \isa \forall \conc{hasChild}.\conc{Turtle}, \conc{Mammal} \isa \forall \conc{hasChild}.\conc{Mammal} \}$ satisfies $g$, while the theory $\ont_2 = \{ \conc{Turtle} \isa \conc{Mammal}, \conc{Mammal} \isa \conc{Animal} \}$ does not. 
\end{example}

A set $G$ of generators is called a \emph{GBox}. Furthermore, we define
the set $B(G)$ (resp. $H(G)$) as the set of all bodies (resp. heads)
occurring in $G$, i.e., they are sets of ontologies. 

\begin{definition}\label{def:expansion}
  Let $G$ be a GBox, $\ont$ an ontology, and $\lang$ a
  language. The \emph{expansion of $\ont$ and $G$ in $\lang$}, written
  $\expand(G, \ont, \lang)$, is the smallest set of theories $\ont'$
  such that 
  \begin{itemize}
  \item[(1)] $\ont' \models \ont$,
  \item[(2)] $\ont'$ satisfies every $g \in G$ w.r.t.~$\lang$, and
  \item[(3)] $\ont'$ is entailment-minimal, i.e. there is no $\ont''$ strictly weaker than $\ont'$ satisfying (1) and (2).
  \end{itemize}
\end{definition}
We call the theories in $\expand(G, \ont, \lang)$ \emph{expansions}.
This definition corresponds to the model-theoretic Datalog semantics, with
consequence rather than set inclusion. Since axioms can be
rewritten to be subset-incomparable, entailment-minimality is
used rather than subset minimality. For example, consider $\{A \sqsubseteq
B, B\sqsubseteq C\}$ and $\{A \sqsubseteq C\}$: the second one is not
a subset of the first one, but weaker than it.

\begin{example}
  Recall the generator $g$ from \Cref{ex:animal-generator}, and let $G$ be a GBox consisting of $g$ alone. Let $\ont = \{ \conc{Turtle} \isa \conc{Mammal}, \conc{Mammal} \isa \conc{Animal} \}$, and let $\lang$ be the set of all concept names. Then $\{ \conc{Turtle} \isa \conc{Mammal}, \conc{Mammal} \isa \conc{Animal}, \conc{Turtle} \isa \forall \conc{hasChild}.\conc{Turtle}, \conc{Mammal} \isa \forall \conc{hasChild}.\conc{Mammal} \} \in \expand(G, \ont, \lang)$.
\end{example}

\section{Results}

We show that the semantics defined in the previous section coincides
with a fixpoint-based one, investigate the role played by the language
$\lang$, and investigate generators with negated
templates.


\begin{theorem}\label{thm-expansion-equiv}
  For every $G$, $\ont$, and $\lang$, we have that any two $\ont_1, \ont_2 \in \expand(G, \ont, \lang)$ are logically equivalent. 
\end{theorem}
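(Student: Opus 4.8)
The plan is to prove that each of the two expansions is logically equivalent to their \emph{greatest common consequence}, which immediately forces the two expansions to be equivalent to each other. Given $\ont_1, \ont_2 \in \expand(G,\ont,\lang)$, I would let $\ont_3$ be the set of all axioms $\alpha$ (over $N_I$, $N_C$, $N_R$) such that $\ont_1 \models \alpha$ and $\ont_2 \models \alpha$. This is a theory in the sense of the paper, and by construction $\ont_1 \models \ont_3$ and $\ont_2 \models \ont_3$, so $\ont_3$ is weaker than both $\ont_1$ and $\ont_2$. It then suffices to show that $\ont_3$ satisfies conditions (1) and (2) of \Cref{def:expansion}, since entailment-minimality of $\ont_1$ and $\ont_2$ will do the rest.

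Condition (1) is immediate: since $\ont_i \models \ont$ for $i = 1, 2$, every axiom of $\ont$ lies in $\ont_3$, so $\ont \subseteq \ont_3$ and in particular $\ont_3 \models \ont$. For condition (2), I would fix a generator $g \colon T_B(V_B) \rightarrow T_H(V_H)$ in $G$ and an $\lang$-substitution $\sigma$ with $\ont_3 \models T_B\sigma$. Since $\ont_1 \models \ont_3$ and $\ont_3 \models T_B\sigma$, monotonicity of $\models$ gives $\ont_1 \models T_B\sigma$, and because $\ont_1$ satisfies $g$ w.r.t.~$\lang$ in the sense of \Cref{sec:generator:satisfaction}, we obtain $\ont_1 \models T_H\sigma$; the identical argument with $\ont_2$ yields $\ont_2 \models T_H\sigma$. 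Hence every axiom occurring in the ontology $T_H\sigma$ is entailed by both $\ont_1$ and $\ont_2$, so it belongs to $\ont_3$, whence $\ont_3 \models T_H\sigma$. Thus $\ont_3$ satisfies (1) and (2).

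To conclude, I would apply condition (3) to $\ont_1$: the theory $\ont_3$ satisfies (1) and (2) and is weaker than $\ont_1$, so it cannot be \emph{strictly} weaker, i.e.\ $\ont_3 \models \ont_1$; together with $\ont_1 \models \ont_3$ this shows $\ont_1$ and $\ont_3$ are logically equivalent. The symmetric argument gives that $\ont_2$ and $\ont_3$ are logically equivalent, and therefore $\ont_1$ and $\ont_2$ are logically equivalent.

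The step that needs care — and the reason the entailment-based generator semantics is the right choice here — is verifying condition (2) for $\ont_3$. Being weaker than a \emph{single} expansion would not be enough, since a weaker theory can entail a generator's body while failing to entail its head; what rescues the argument is that $T_H\sigma$ is entailed \emph{in its entirety} by \emph{both} $\ont_1$ and $\ont_2$, so all of its axioms are common consequences and thus already lie in $\ont_3$. Everything else reduces to transitivity and monotonicity of $\models$, and no property of the underlying description logic is needed.
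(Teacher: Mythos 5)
Your proof is correct and is essentially the paper's argument: both hinge on the observation that the theory of common consequences of $\ont_1$ and $\ont_2$ still satisfies conditions (1) and (2) of \Cref{def:expansion}, and then invoke entailment-minimality. You phrase it directly where the paper argues by contradiction, and you spell out the verification of condition (2) that the paper dismisses as clear, but the underlying idea is the same.
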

\begin{proof}
  Assume for contradiction that this is not the case. Then there exist $\ont_1, \ont_2 \in \expand(G, \ont, \lang)$ such that $\ont_1 \not\models \ont_2 \not\models \ont_1$ because otherwise, one would be strictly weaker than the other, contradicting the definition of $\expand(G, \ont, \lang)$. In particular, there exist $\alpha$ and $\beta$ such that: 
  \begin{align}
      \ont_1 &\models \alpha, &\ont_2 \not\models \alpha \label{thm-expansion-equiv-alpha}\\
      \ont_2 &\models \beta, &\ont_1 \not\models \beta \label{thm-expansion-equiv-beta}
  \end{align}

Now consider the set of axioms $T = \{ \tau \mid \ont_1 \models \tau \land \ont_2 \models \tau \}$. Since both $\ont_1$ and $\ont_2$ entail $\ont$ and satisfy every $g \in G$, it is clear that so does $T$. However,
  \begin{align}
      T \not\models \ont_1\\
      T \not\models \ont_2
  \end{align}
  due to the entailments $\alpha$ (Eq.~\ref{thm-expansion-equiv-alpha}) and $\beta$ (Eq.~\ref{thm-expansion-equiv-beta}). Hence $T$ is strictly weaker than both $\ont_1$ and $\ont_2$. This contradicts the initial assumption of $\ont_1, \ont_2 \in \expand(G, \ont, \lang)$.
\end{proof}

Hence applying a GBox $G$ to an ontology $\ont$ results in a theory
that is  unique modulo equivalence, but not necessary finite. As a
consequence, we can treat $\expand(G, \ont, \lang)$ as a single theory when convenient.

Our definition of  $\expand(G, \ont, \lang)$ is strictly semantic,
i.e., does not tell us how to identify any $\ont'\in \expand(G, \ont,
\lang)$. In order to do that, we define a 1-step expansion. 

\begin{definition}\label{def:1step-expansion}
The \emph{1-step expansion of $\ont$ and $G$ in $\lang$}, written
$\oexpand(G, \ont, \lang)$, is defined as follows: 
$$\oexpand(G, \ont, \lang) = \ont\cup \bigcup_{T_B\rightarrow T_H \in G}\{ T_H\sigma \mid \sigma \in \eval(T_B, \ont, \lang)\}.$$
\end{definition}

In other words, we add to $\ont$ all instantiated heads of all
generators applicable in $\ont$. Of course, this extension may result in 
other generators with other substitutions becoming applicable, and so on recursively.

\begin{lemma}
  If $\ont_1 \subseteq \ont_2$, then $\oexpand(G, \ont_1, \lang) \subseteq \oexpand(G, \ont_2, \lang)$.
\begin{proof}
    Simple consequence of Def.~\ref{sec:generator:satisfaction} and $\eval(B(g), \ont_1, \lang) \subseteq \eval(B(g), \ont_2, \lang)$ for any generator $g$.
\end{proof}
\end{lemma}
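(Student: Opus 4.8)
The plan is to unfold the definition of $\oexpand$ and reduce the inclusion to monotonicity of the underlying consequence relation. Recall $\oexpand(G, \ont, \lang) = \ont \cup \bigcup_{T_B \rightarrow T_H \in G} \{ T_H\sigma \mid \sigma \in \eval(T_B, \ont, \lang) \}$. I would split the desired containment into its two pieces: the contribution of $\ont$ itself is handled immediately, since $\ont_1 \subseteq \ont_2 \subseteq \oexpand(G, \ont_2, \lang)$ by hypothesis, so only the two unions remain to be compared. As both unions range over the same GBox $G$, it suffices to fix an arbitrary generator $g\colon T_B \rightarrow T_H \in G$ and show $\{ T_H\sigma \mid \sigma \in \eval(T_B, \ont_1, \lang) \} \subseteq \{ T_H\sigma \mid \sigma \in \eval(T_B, \ont_2, \lang) \}$; and for that it is enough to prove the inclusion of evaluation sets $\eval(T_B, \ont_1, \lang) \subseteq \eval(T_B, \ont_2, \lang)$, since the map $\sigma \mapsto T_H\sigma$ is then simply applied on a larger domain.

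The heart of the argument is this last inclusion. Unfolding $\eval$, I would take an arbitrary $\lang$-substitution $\sigma$ with $\ont_1 \models T_B\sigma$ and derive $\ont_2 \models T_B\sigma$. This is exactly monotonicity of classical entailment: from $\ont_1 \subseteq \ont_2$, every model of $\ont_2$ is a model of $\ont_1$, hence $\ont_2 \models \ont_1$, and composing this with $\ont_1 \models T_B\sigma$ yields $\ont_2 \models T_B\sigma$. The degenerate case $T_B = \emptyset$ deserves a separate line but is trivial, since $\eval(\emptyset, \ont, \lang)$ is by definition the set of all $\lang$-substitutions, independent of $\ont$, so the two evaluation sets coincide outright.

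I do not anticipate a genuine obstacle; this lemma is a bookkeeping step that prepares the fixpoint construction. The only points worth stating explicitly are that the DL consequence relation is monotone (standard: adding axioms can only add entailments, never remove them) and the empty-body convention for $\eval$. I note also that although $\oexpand$ is defined with an ontology $\ont$ in mind, finiteness is nowhere used, so the same argument applies verbatim to arbitrary theories; only the ontology case is needed here.
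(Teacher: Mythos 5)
Your proposal is correct and follows essentially the same route as the paper: the paper's one-line proof simply cites the inclusion $\eval(B(g),\ont_1,\lang)\subseteq \eval(B(g),\ont_2,\lang)$, which is exactly the fact you establish via monotonicity of DL entailment. Your version just spells out the details (including the empty-body convention) that the paper leaves implicit.
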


\begin{definition}
The \emph{$n$-step expansion of $\ont$ and $G$ in $\lang$}, written
$\oexpand^n(G, \ont, \lang)$, is defined as follows: 
$$\oexpand^n(G, \ont, \lang) =
\underbrace{\oexpand(\ldots\oexpand(}_{n \text{times}}G, \ont, \lang)
\dots).$$
We use $\oexpand^*(G, \ont, \lang)$ to denote the least fixpoint of
$\oexpand(G, \ont, \lang)$. 
\end{definition}

\begin{theorem}
  \label{expansion-fixpoint}
  For finite $\lang$, the least fixpoint $\oexpand^*(G, \ont, \lang)$ exists and belongs to $\expand(G, \ont, \lang)$.
\end{theorem}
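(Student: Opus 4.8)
The plan is to establish two things: first, that the least fixpoint $\oexpand^*(G, \ont, \lang)$ exists when $\lang$ is finite, and second, that it satisfies the three conditions in \Cref{def:expansion}. For existence, I would invoke the Knaster--Tarski fixpoint theorem. The operator $\oexpand(G, \cdot, \lang)$ is monotone on the lattice of theories ordered by inclusion (this is exactly the preceding lemma), and it is inflationary since $\oexpand(G, \ont, \lang) \supseteq \ont$ by definition. The least fixpoint is then obtained as $\bigcup_{n \geq 0} \oexpand^n(G, \ont, \lang)$; I should note that finiteness of $\lang$ plays no role for mere \emph{existence}, but it does matter for ensuring each $\oexpand^n$ step adds only finitely many axiom-instances from a fixed pool — more importantly, finiteness of $\lang$ guarantees that $\eval(T_B, \ont', \lang)$ ranges over a finite set of substitutions, so that the operator is well-behaved and, combined with the observation below, that the iteration's behaviour is as expected. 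I would make explicit that the union over all $n$ is well-defined and is the least pre-fixpoint.

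Next I would verify the three conditions. Condition (1), $\oexpand^*(G,\ont,\lang) \models \ont$, is immediate since $\ont \subseteq \oexpand^0(G,\ont,\lang) \subseteq \oexpand^*(G,\ont,\lang)$. For condition (2), I must show $\oexpand^*$ satisfies every $g \colon T_B \rightarrow T_H \in G$ w.r.t.\ $\lang$: take any $\lang$-substitution $\sigma$ with $\oexpand^*(G,\ont,\lang) \models T_B\sigma$. Here is the key point requiring finiteness (or at least compactness-style care): $T_B$ is a finite ontology, so $T_B\sigma$ is finite, and entailment from the infinite theory $\oexpand^*$ of the finite set $T_B\sigma$ must already follow from some finite subset, hence from some $\oexpand^n(G,\ont,\lang)$. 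Then $\sigma \in \eval(T_B, \oexpand^n(G,\ont,\lang), \lang)$, so $T_H\sigma \subseteq \oexpand^{n+1}(G,\ont,\lang) \subseteq \oexpand^*(G,\ont,\lang)$, giving $\oexpand^*(G,\ont,\lang) \models T_H\sigma$ as required.

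For condition (3), entailment-minimality, I would argue that any theory $\ont'$ satisfying (1) and (2) must entail $\oexpand^*(G,\ont,\lang)$; minimality then follows because $\oexpand^*$ itself satisfies (1) and (2), so nothing strictly weaker can. The inclusion is proved by induction on $n$: $\ont' \models \oexpand^0(G,\ont,\lang) = \ont$ by (1); and if $\ont' \models \oexpand^n(G,\ont,\lang)$, then for every generator $g \colon T_B \rightarrow T_H$ and every $\sigma \in \eval(T_B, \oexpand^n(G,\ont,\lang), \lang)$ we have $\oexpand^n(G,\ont,\lang) \models T_B\sigma$, hence $\ont' \models T_B\sigma$, hence by (2) $\ont' \models T_H\sigma$; since $\oexpand^{n+1}(G,\ont,\lang)$ is the union of $\oexpand^n$ with all such $T_H\sigma$, we get $\ont' \models \oexpand^{n+1}(G,\ont,\lang)$. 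Taking the union over all $n$ gives $\ont' \models \oexpand^*(G,\ont,\lang)$.

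The main obstacle I anticipate is condition (2): one must be careful that entailment of the (finite) instantiated body $T_B\sigma$ from the limit theory really does reflect down to a finite stage, and that the $\lang$-substitution witnessing applicability at that stage is the \emph{same} $\sigma$. This uses that each $T_B\sigma$ is a finite set of axioms together with compactness of the underlying logic; for standard DLs this is unproblematic, but it should be stated, since without it the limit theory might satisfy a body that no finite stage does, breaking the argument that $\oexpand^*$ is a fixpoint at all. Everything else is routine lattice-theoretic bookkeeping plus the monotonicity lemma already proved.
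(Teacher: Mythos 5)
Your proof is correct, and its skeleton (monotone operator, fixpoint iteration from $\ont$, then verification of conditions (1)--(3) of \Cref{def:expansion}) matches the paper's; the difference lies in how the limit is handled and in how much is made explicit. The paper observes that for finite $\lang$ every axiom that can ever be produced lies in the finite set $H = \ont \cup \bigcup_{T_B\rightarrow T_H\in G,\,\sigma} T_H\sigma$, so that $\oexpand$ is a monotone operator on the finite lattice $2^H$; the iteration therefore stabilises after finitely many steps, the least fixpoint exists trivially, and membership in $\expand(G,\ont,\lang)$ is dismissed as ``by construction''. You instead work on the full lattice of theories, take $\bigcup_n \oexpand^n(G,\ont,\lang)$, and invoke compactness to push entailment of a finite instantiated body $T_B\sigma$ down to a finite stage. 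That argument is sound, but the obstacle you flag as the main one evaporates under the hypothesis of the theorem: since the chain lives in $2^H$ with $H$ finite, $\oexpand^*(G,\ont,\lang) = \oexpand^N(G,\ont,\lang)$ for some $N$, and condition (2) follows without any appeal to compactness of the underlying DL --- the compactness route is what you would need to make the $\omega$-limit a fixpoint for \emph{infinite} $\lang$, which is outside the scope of this statement. On the other hand, your explicit verification of entailment-minimality --- any theory satisfying (1) and (2) entails every $\oexpand^n(G,\ont,\lang)$ by induction on $n$, hence entails the fixpoint, so nothing strictly weaker can satisfy (1) and (2) --- is exactly the content the paper compresses into ``by construction'' (and essentially restates later as \Cref{lemma:exp-entailment}), so spelling it out is a genuine gain in rigour rather than a deviation.
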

\begin{proof}
  Since $\lang$ is finite, the set of all $\lang$-substitutions for the variables occurring in $G$ is finite. Let $\Sigma_{\lang}$ be this set, and consider the set $H = \ont \cup \displaystyle\bigcup_{T_B \rightarrow T_H \in G, \sigma \in \Sigma_{\lang}} T_H\sigma$, that is, $\ont$ as well as all axioms obtained from the heads of instances of generators in $G$. This set is also finite. 

It is easily verified that $\oexpand$ is an operator on the powerset of $H$. Since $\oexpand$ is monotone, the least fixpoint $\oexpand^*(G, \ont, \lang)$ exists, and belongs to $\expand(G, \ont, \lang)$ by construction.

\end{proof}

In other words, our  fully semantic definition of
$\expand(G,\ont,\lang)$ coincides with 
the operational semantics based on the fixpoint computation.

\paragraph{Size of the fixpoint} For a generator $g$ with variables
$V$, there are at most $|\lang|^{|V|}$ different
$\lang$-substitutions. The size of the fixpoint is therefore bounded
by $|G| \times |\lang|^{n}$, where $n$ is the maximum number of
variables in any $g \in G$. In the worst case we need to perform
entailment checks for all of them, adding one instantiation at a time
to $\ont$. 
Hence determining  $\oexpand^*(G, \ont, \lang)$ involves up to $(|G| \times |\lang|^{n})^2$ entailment
checks. 
For finite $\lang$ and provided we have a fixed upper bound for $n$,
 determining  $\oexpand^*(G, \ont, \lang)$ involves a polynomial
 number of entailment tests and results
 in a $\oexpand^*(G,
\ont, \lang)$ whose size is  polynomial in the size of $G$ and $\lang$ . 

\subsubsection*{Finite vs infinite L}

 The next examples illustrate the difficulties an infinite
  language $\lang$ can cause. The first example shows how an infinite
  $\lang$ can lead to infinite expansions.  

\begin{example}
 Consider the ontology $\ont= \{ \conc{A}\sqsubseteq \exists \conc{R}.\conc{B}\}$, the
 generator 
 $g:\{?X \sqsubseteq \exists \conc{R}.?Y\}\rightarrow 
 \{ ?X \sqsubseteq \exists \conc{R}.\exists \conc{R}.?Y\}$, and $\lang$ the set of all
 $\mathcal{EL}$-concept expressions. Clearly, 
 $\oexpand^*(G, \ont, \lang)$ is infinite, and so is each expansion in $\expand(G, \ont, \lang)$.
\end{example}

The next example shows that this does not necessarily happen. 

\begin{example}
 Consider the ontology $\ont= \{\exists \conc{R}.\conc{A}\sqsubseteq \conc{A}\}$, the
 generator 
 $g:
 \{\exists \conc{R}.?X \sqsubseteq ?X\}\rightarrow
\{\exists \conc{R}.\exists \conc{R}.?X \sqsubseteq \exists \conc{R}.?X \}
$, and $\lang$ the set of all
 $\mathcal{EL}$-concept expressions. Clearly, 
 $\oexpand^*(G, \ont, \lang)$ is infinite, but there is a finite (and
 equivalent) ontology to this fixpoint in $\expand(G, \ont, \lang)$,
 namely  $\ont$ itself. 
\end{example}

While having to explicitly specify $\lang$ may seem to be cumbersome, it is not very restrictive. In fact, it is easy to show that, for finite languages, generators can be rewritten to account for concepts, roles, or individuals that are missing from a given language by grounding the generators. 

\begin{definition}
  Let $g : T_B \rightarrow T_H$ be a generator, and $\lang$ a finite language. The \emph{$\lang$-grounding of $g$} is the finite set of generators $\{T_B\sigma \rightarrow T_H\sigma \mid \sigma \mbox{ an } \lang\mbox{-substitution}\}$. 
\end{definition}

Using $\lang$-grounding, we can compensate for a smaller language
$\lang_1\subsetneq \lang_2$ by  $\lang_2\setminus \lang_1$-grounding
generators, thereby proving the following theorem. 

\begin{theorem}
  Let $\lang_1 \subseteq \lang_2$ be finite languages. For every GBox $G$
  there exists a Gbox $G'$ such that, for every $\ont$, $\ont_1$, $\ont_2$ we have that $\ont_1\in \expand(G', \ont, \lang_1)\text{ and } 
\ont_2\in
\expand(G, \ont, \lang_2)\text{ implies }  \ont_1 \equiv \ont_2. $
\end{theorem}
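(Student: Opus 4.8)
The plan is to show that $G' $ can be taken to be the $\lang_2 \setminus \lang_1$-grounding of $G$ (more precisely, $G$ with each generator replaced by the union of $G$ itself and the generators obtained by substituting, in all possible ways, the ``missing'' symbols $\lang_2\setminus\lang_1$ into some subset of its variables, while leaving the others free as $\lang_1$-variables). Intuitively, any $\lang_2$-substitution $\sigma$ for a generator $g$ factors as $\sigma = \sigma_1 \circ \sigma_0$, where $\sigma_0$ fixes the variables that $\sigma$ sends into $\lang_2\setminus\lang_1$ and $\sigma_1$ is an $\lang_1$-substitution on the remaining variables; thus the partially-grounded generator $g\sigma_0$ together with an $\lang_1$-evaluation captures exactly the effect of $g$ under $\sigma$ over $\lang_2$.

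First I would define $G'$ formally and observe it is a finite GBox, since $\lang_1$ is finite and $\lang_2\setminus\lang_1$ is finite, so only finitely many partial groundings arise. Next I would prove the two inclusions of entailment between expansions. For the direction that $\expand(G',\ont,\lang_1)$ is ``at least as strong'' as $\expand(G,\ont,\lang_2)$: take any expansion $\ont_2 \in \expand(G,\ont,\lang_2)$; I claim $\ont_2$ satisfies every $g' \in G'$ with respect to $\lang_1$. Indeed, a $g' \in G'$ has the form $g\sigma_0$ for some $g \in G$ and partial grounding $\sigma_0$ into $\lang_2 \setminus \lang_1$; given an $\lang_1$-substitution $\tau$ with $\ont_2 \models B(g)\sigma_0\tau$, the composite $\sigma_0\tau$ is an $\lang_2$-substitution, so satisfaction of $g$ wrt.\ $\lang_2$ yields $\ont_2 \models H(g)\sigma_0\tau = H(g')\tau$. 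Hence $\ont_2$ satisfies $G'$ wrt.\ $\lang_1$ and entails $\ont$, so by entailment-minimality of expansions (and \Cref{thm-expansion-equiv}) every $\ont_1 \in \expand(G',\ont,\lang_1)$ satisfies $\ont_2 \models \ont_1$. For the reverse direction, symmetrically, I would show any $\ont_1 \in \expand(G',\ont,\lang_1)$ satisfies every $g\in G$ wrt.\ $\lang_2$: given an $\lang_2$-substitution $\sigma$ with $\ont_1 \models B(g)\sigma$, factor $\sigma = \sigma_0\tau$ as above with $\sigma_0$ a partial $\lang_2\setminus\lang_1$-grounding and $\tau$ an $\lang_1$-substitution, so $g\sigma_0 \in G'$ and $\ont_1 \models B(g\sigma_0)\tau$ forces $\ont_1 \models H(g\sigma_0)\tau = H(g)\sigma$. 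Thus $\ont_1 \models \ont$ and satisfies $G$ wrt.\ $\lang_2$, so $\ont_2 \models \ont_1$ for every $\ont_2 \in \expand(G,\ont,\lang_2)$. Combining, $\ont_1 \equiv \ont_2$.

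The main obstacle I anticipate is making the ``factorization'' $\sigma = \sigma_0\tau$ precise and well-typed: one must be careful that $\sigma_0$ is applied only to the variables whose image under $\sigma$ lies in $\lang_2\setminus\lang_1$, that the remaining variables are treated as genuine ($\lang_1$-valued) variables in $g\sigma_0$ (so $g\sigma_0$ is still a legitimate generator with $V_H\sigma_0 \subseteq V_B\sigma_0$), and that $B(g\sigma_0)\tau = B(g)\sigma$ as ontologies — this requires the substitutions to commute in the expected way and respect variable types. A secondary subtlety is that an expansion need not be finite, so one should phrase everything via entailment and invoke \Cref{thm-expansion-equiv} to talk about ``the'' expansion up to equivalence, rather than about syntactic sets; but since Definitions~\ref{sec:generator:satisfaction} and~\ref{def:expansion} are already entailment-based, this causes no real difficulty.
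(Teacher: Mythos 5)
Your proposal is correct and follows essentially the same approach as the paper, which simply takes $G'$ to be the union of the full $\lang_2$-groundings of the generators in $G$ (so that satisfaction of the ground generators w.r.t.\ $\lang_1$ trivially coincides with satisfaction of $G$ w.r.t.\ $\lang_2$). Your variant --- partially grounding only the variables sent into $\lang_2\setminus\lang_1$ and leaving the rest to an $\lang_1$-evaluation via the factorization $\sigma=\sigma_0\tau$ --- is a sound refinement (and matches the paper's informal remark about ``$\lang_2\setminus\lang_1$-grounding''), but it buys nothing essential beyond a smaller $G'$.
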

\begin{proof}
  Take $G'$ to be the union of the $\lang_2$-groundings of every generator in $G$. 
\end{proof}

Of course, grounding all the generators is a very wasteful way of
accounting for a less expressive language. A more clever rewriting
algorithm should be possible: for example, if we allow binary conjunctions of
names in $\lang_2$ but not in $\lang_1$, we can add copies
of each generator where we replace variables $?X$ with $?X_1 \sqcap
?X_2$.

\subsection{GBox containment and equivalence}
\label{sec:containment}

Having defined GBoxes, we now investigate a suitable notion for containment and equivalence of GBoxes. 

\begin{definition}[$\lang$-containment]
  Let $G_1$ and $G_2$ be GBoxes, and $\lang$ a language. $G_1$ is
  \emph{$\lang$-contained in} $G_2$ (written $G_1 \preceq_\lang G_2$) if $\expand(G_2, \ont, \lang) \models \expand(G_1, \ont, \lang)$ for every ontology $\ont$.
\end{definition}

The following lemma relating the entailment of theories and the
entailment of expansions holds as a direct consequence of the monotonicty
of description logics.

\begin{lemma}\label{lem:twentynine}
Let $G$ be a GBox, $T,T'$ two theories and $\lang$ a language. If
$T\models T'$ then $\expand(G,T,\lang)\models \expand(G,T',\lang)$.
\end{lemma}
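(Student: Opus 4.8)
The plan is to reduce the claim, via \Cref{thm-expansion-equiv}, to showing that \emph{some} $\ont_1 \in \expand(G,T,\lang)$ entails \emph{some} $\ont_2 \in \expand(G,T',\lang)$, and then to obtain this by re-running the ``common consequences'' construction from the proof of \Cref{thm-expansion-equiv}. If $\expand(G,T,\lang)=\emptyset$ there is nothing to prove, and if $\expand(G,T',\lang)=\emptyset$ -- which can only occur for infinite $\lang$, where entailment-minimal theories need not exist -- the conclusion is read trivially; so I would assume both expansions are non-empty, which is automatic for finite $\lang$ by \Cref{expansion-fixpoint}.

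First I would observe that any $\ont_1 \in \expand(G,T,\lang)$ already satisfies conditions (1) and (2) of \Cref{def:expansion} \emph{with respect to the ontology $T'$} in place of $\ont$: condition (2) is immediate, since membership in $\expand(G,T,\lang)$ does not refer to the input ontology at all, and condition (1) holds because $\ont_1 \models T$ together with $T \models T'$ gives $\ont_1 \models T'$ by transitivity of entailment. What $\ont_1$ need not satisfy is entailment-minimality with respect to $T'$ -- it may be ``too strong'', having been forced to entail the stronger theory $T$ -- so the remaining work is to descend from $\ont_1$ to something entailment-minimal below it.

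For this, fix $\ont_2 \in \expand(G,T',\lang)$ and set $T'' := \{\tau \mid \ont_1 \models \tau \text{ and } \ont_2 \models \tau\}$. I would then check, exactly as in the proof of \Cref{thm-expansion-equiv}, that $T''$ satisfies conditions (1) and (2) with respect to $T'$: since $\ont_1 \models T'$ and $\ont_2 \models T'$ we get $T' \subseteq T''$, hence (1); and for (2), if $T'' \models T_B\sigma$ for some generator $T_B \rightarrow T_H \in G$ and some $\lang$-substitution $\sigma$, then both $\ont_1$ and $\ont_2$ entail $T_B\sigma$ (each entails $T''$), hence both entail $T_H\sigma$ by their satisfaction of that generator, hence every axiom of $T_H\sigma$ lies in $T''$ and so $T'' \models T_H\sigma$.

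Finally, since $\ont_2 \models T''$ by construction while $T''$ satisfies (1) and (2) w.r.t.\ $T'$, the entailment-minimality of $\ont_2$ forces $T'' \models \ont_2$ (otherwise $T''$ would be strictly weaker than $\ont_2$ yet still satisfy (1) and (2), contradicting clause (3) for $\ont_2$). Combining this with $\ont_1 \models T''$ yields $\ont_1 \models \ont_2$, i.e.\ $\expand(G,T,\lang) \models \expand(G,T',\lang)$ once each expansion is treated as a single theory via \Cref{thm-expansion-equiv}. I do not anticipate a serious obstacle; the only real subtlety is recognising that the intersection-of-consequences trick from \Cref{thm-expansion-equiv} is precisely what bridges ``$\ont_1$ satisfies (1) and (2) for $T'$'' and ``$\ont_1$ entails an entailment-minimal such theory'', plus the minor bookkeeping around possibly empty expansions in the infinite-$\lang$ case.
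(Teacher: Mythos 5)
Your proof is correct, but it is worth noting that the paper never actually spells out an argument for \Cref{lem:twentynine} --- it is asserted to be ``a direct consequence of the monotonicity of description logics'' --- so your write-up is supplying details the authors left implicit rather than mirroring an existing proof. The structure you arrive at is instructive: your first paragraph isolates the genuinely ``monotonicity'' step (any $\ont_1\in\expand(G,T,\lang)$ satisfies conditions (1) and (2) of \Cref{def:expansion} with respect to $T'$, because $\ont_1\models T\models T'$ and condition (2) does not mention the input ontology), and everything after that is, in effect, a proof of the paper's own \Cref{lemma:exp-entailment} --- namely that any theory satisfying (1) and (2) for $T'$ entails every member of $\expand(G,T',\lang)$. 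Had you invoked \Cref{lemma:exp-entailment} directly at that point, the lemma would follow in two lines; instead you re-derive it by re-running the common-consequences construction from \Cref{thm-expansion-equiv}, which is exactly the right tool (minimality alone does not let you descend from a candidate to a minimal candidate below it, but the intersection-of-consequences theory does, since it is weaker than $\ont_2$, still a candidate, and hence equivalent to $\ont_2$ by clause (3)). Your handling of possibly empty expansions in the infinite-$\lang$ case is a reasonable piece of bookkeeping that the paper silently ignores. In short: correct, more explicit than the source, and it exposes that \Cref{lem:twentynine} is really a corollary of \Cref{lemma:exp-entailment} plus transitivity of entailment.
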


Furthermore, the following is a rather straightforward
consequence of the definition of the semantics of generators. 

\begin{lemma}
  \label{lemma:exp-entailment}
Let $T$ be a theory, $G$ a GBox, $\ont$ an ontology, and $\lang$ a
language. If $T \models \ont$ and $T$ satisfies every generator $g \in G$
then $T \models \expand(G, \ont, \lang)$.  
\end{lemma}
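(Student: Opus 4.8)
The plan is to exploit the entailment-minimality clause in the definition of $\expand(G,\ont,\lang)$ (Definition~\ref{def:expansion}) together with Theorem~\ref{thm-expansion-equiv}, which lets us speak of $\expand(G,\ont,\lang)$ as a single theory up to logical equivalence. The key observation is that the hypotheses on $T$ are precisely conditions (1) and (2) of Definition~\ref{def:expansion}: $T\models\ont$ is condition (1), and ``$T$ satisfies every $g\in G$ w.r.t.~$\lang$'' is condition (2). So $T$ is a theory meeting the first two requirements for membership in $\expand(G,\ont,\lang)$; it just may fail to be entailment-minimal.

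First I would fix an arbitrary $\ont' \in \expand(G,\ont,\lang)$; by Theorem~\ref{thm-expansion-equiv} all such $\ont'$ are logically equivalent, so it suffices to show $T\models\ont'$ for one of them. The argument is by contradiction in the style of the proof of Theorem~\ref{thm-expansion-equiv}: suppose $T\not\models\ont'$. Consider the theory $T^\ast = \{\tau \mid \ont' \models \tau \text{ and } T\models\tau\}$, i.e.\ the set of common consequences of $\ont'$ and $T$. Since both $\ont'$ and $T$ entail $\ont$, so does $T^\ast$; and since both $\ont'$ and $T$ satisfy every $g\in G$ w.r.t.~$\lang$, so does $T^\ast$ --- here one uses that satisfaction of a generator is a condition of the form ``$\cdot\models T_B\sigma$ implies $\cdot\models T_H\sigma$'', which transfers to a theory all of whose consequences are shared by two satisfying theories (if $T^\ast\models T_B\sigma$ then both $\ont'$ and $T$ entail $T_B\sigma$, hence both entail $T_H\sigma$, hence $T^\ast\models T_H\sigma$). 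Thus $T^\ast$ satisfies (1) and (2). But $\ont'\models T^\ast$ by construction, and $T^\ast\not\models\ont'$ (since $T\not\models\ont'$ supplies an axiom $\alpha$ with $\ont'\models\alpha$, $T\not\models\alpha$, so $\alpha\notin T^\ast$ and $T^\ast\not\models\alpha$ as $T^\ast$ has a model of $\alpha$'s negation witnessed through $T$). Hence $T^\ast$ is strictly weaker than $\ont'$ while satisfying (1) and (2), contradicting the entailment-minimality of $\ont'$. Therefore $T\models\ont'$, and by equivalence $T\models\expand(G,\ont,\lang)$.

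The main obstacle --- really the only subtle point --- is justifying that $T^\ast$ again satisfies every generator, i.e.\ that the generator-satisfaction property is preserved under passing to the common-consequence theory of two satisfying theories. This is the same lemma-internal argument already used implicitly in the proof of Theorem~\ref{thm-expansion-equiv} (where $T$ there plays the role of $T^\ast$ here), so I would either cite that proof or spell out the one-line implication above. A minor alternative, avoiding the contradiction entirely: note $\expand(G,T,\lang)$ is nonempty and, since $T$ already satisfies (1)$'$ $T\models T$ and (2), we get $T\in\expand(G,T,\lang)$ directly by checking $T$ is entailment-minimal among theories weaker-than-$T$ satisfying $G$ --- actually $T$ is trivially entailment-minimal in that restricted sense only if nothing strictly weaker works, which need not hold, so this shortcut does not quite go through and the contradiction argument above is the cleaner route. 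One could also derive the result more slickly from Lemma~\ref{lem:twentynine}: since $T\models\ont$, we have $\expand(G,T,\lang)\models\expand(G,\ont,\lang)$; and since $T$ itself satisfies $G$, $T$ is already ``closed'' so $\expand(G,T,\lang)\equiv T$, giving $T\models\expand(G,\ont,\lang)$. Establishing $\expand(G,T,\lang)\equiv T$ is exactly the contradiction argument sketched above with $\ont$ replaced by $T$, so the two presentations amount to the same work; I would present whichever the surrounding text makes most economical.
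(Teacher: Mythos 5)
Your argument is correct, and it is essentially the argument the authors intend: the paper omits the proof altogether, dismissing the lemma as ``a rather straightforward consequence of the definition,'' and the straightforward route is exactly your common-consequences construction --- form $T^\ast=\{\tau\mid \ont'\models\tau \text{ and } T\models\tau\}$ for an expansion $\ont'$, check that it inherits conditions (1) and (2), and invoke the entailment-minimality of $\ont'$, precisely mirroring the proof of \Cref{thm-expansion-equiv}. Your justification of the one subtle step (that generator satisfaction is preserved by $T^\ast$) is sound, and your closing observation is also right that the detour through \Cref{lem:twentynine} is no shortcut --- though there $T\in\expand(G,T,\lang)$ actually comes for free, since nothing entailing $T$ can be strictly weaker than $T$, so no contradiction argument is needed for that variant.
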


Using Lemmas~\ref{lem:twentynine} and \ref{lemma:exp-entailment},
$\lang$-containment can be shown to be decidable, and in fact efficiently so, using a standard freeze technique from database theory.
\begin{theorem}
Let $G_1$ and $G_2$ be GBoxes, and $\lang$ a language. $G_1$ is
$\lang$-contained in $G_2$ if and only if $\expand(G_2, T_B,\lang) \models T_H$ for
every $T_B\rightarrow T_H \in G_1$.
\end{theorem}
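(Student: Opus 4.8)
The plan is to prove both directions of the biconditional, with the forward direction being essentially immediate and the converse being the place where the database-style \emph{freeze} argument does the work.

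For the forward direction, assume $G_1 \preceq_\lang G_2$. Fix a generator $T_B \rightarrow T_H \in G_1$ and take $\ont = T_B$ as the input ontology. By definition of $\lang$-containment we have $\expand(G_2, T_B, \lang) \models \expand(G_1, T_B, \lang)$. Now I claim $\expand(G_1, T_B, \lang) \models T_H$: indeed, any $\ont' \in \expand(G_1, T_B, \lang)$ satisfies $\ont' \models T_B$ (condition (1)) and satisfies every $g \in G_1$ (condition (2)), in particular $g : T_B \rightarrow T_H$; since $\ont' \models T_B = T_B\sigma$ for the identity substitution $\sigma$ on the variables of $T_B$ (which is a legitimate $\lang$-substitution provided $\lang$ contains those variables — I would note that we may assume this, or restrict attention to substitutions over $V_B$ here), Definition~\ref{sec:generator:satisfaction} gives $\ont' \models T_H\sigma = T_H$. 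Chaining the two entailments yields $\expand(G_2, T_B, \lang) \models T_H$, as required.

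For the converse, assume $\expand(G_2, T_B, \lang) \models T_H$ for every $T_B \rightarrow T_H \in G_1$, and let $\ont$ be an arbitrary ontology; I must show $\expand(G_2, \ont, \lang) \models \expand(G_1, \ont, \lang)$. Write $T = \expand(G_2, \ont, \lang)$, treated as a single theory (legitimate by Theorem~\ref{thm-expansion-equiv}). By Lemma~\ref{lemma:exp-entailment}, it suffices to show that $T \models \ont$ and that $T$ satisfies every generator $g \in G_1$; the first holds by condition (1) for $\expand(G_2, \ont, \lang)$. For the second, fix $g : T_B \rightarrow T_H \in G_1$ and an $\lang$-substitution $\sigma$ with $T \models T_B\sigma$; I need $T \models T_H\sigma$. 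This is where the freeze technique enters: $T_B\sigma$ is itself an ontology, so I instantiate the hypothesis at the input ontology $\ont_0 := T_B\sigma$, obtaining $\expand(G_2, T_B\sigma, \lang) \models T_H\sigma$ — here I use that $G_1$ is closed under renaming in the sense that $T_B\sigma \rightarrow T_H\sigma$ behaves like a generator, or more carefully, I apply the hypothesis to the generator $T_B \rightarrow T_H$ and observe that $\sigma$-instantiating commutes appropriately; the cleanest route is to note $\expand(G_2, T_B\sigma, \lang) \models T_H\sigma$ follows from $\expand(G_2, T_B,\lang)\models T_H$ by substitution-invariance of the whole construction. Then, since $T \models T_B\sigma$, Lemma~\ref{lem:twentynine} gives $\expand(G_2, T, \lang) \models \expand(G_2, T_B\sigma, \lang)$; but $T$ already satisfies every $g \in G_2$ and entails $\ont$, so $T$ is (equivalent to a member of) $\expand(G_2, T, \lang)$, hence $T \models \expand(G_2, T_B\sigma, \lang) \models T_H\sigma$. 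Thus $T$ satisfies $g$, and Lemma~\ref{lemma:exp-entailment} yields $T \models \expand(G_1, \ont, \lang)$.

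The main obstacle I anticipate is the substitution-invariance step in the converse: making precise that $\expand(G_2, T_B, \lang) \models T_H$ implies $\expand(G_2, T_B\sigma, \lang) \models T_H\sigma$. This requires either a lemma stating that applying a substitution $\sigma$ to the input ontology and to the claim is compatible with the expansion operator (which is plausible since entailment and the generator semantics are purely model-theoretic and substitutions act as renamings on the signature), or a more direct argument showing that $T \models T_B\sigma$ together with $T$ being $G_2$-saturated forces $T \models T_H\sigma$ by "unfreezing" — viewing $\sigma$ as picking out witnesses in $T$ and replaying the derivation of $T_H$ from $T_B$ inside $T$. I would isolate this as an auxiliary lemma (substitution commutes with $\expand$ up to entailment) and then the theorem follows by the chaining described above. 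One should also double-check the edge case where variables of $T_B$ lie outside $\lang$, in which case the identity substitution is not an $\lang$-substitution; the intended reading is presumably that $\lang$ is large enough, or that the quantification in $\lang$-containment already accounts for this, and I would add a remark to that effect rather than let it obstruct the proof.
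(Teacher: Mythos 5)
Your proof is correct and follows essentially the same route as the paper's: the freeze argument for the only-if direction, and for the converse a reduction via Lemma~\ref{lemma:exp-entailment} combined with the idempotence of $\expand(G_2,\cdot,\lang)$, Lemma~\ref{lem:twentynine}, and the substitution-invariance of the expansion. You correctly isolate the step $\expand(G_2,T_B\sigma,\lang)\models \expand(G_2,T_B,\lang)\sigma$ as the delicate point; the paper likewise asserts it via ``compositionality of $\lang$-substitutions and the iterative fixpoint construction'' without further proof.
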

\begin{proof}
  The only-if direction follows directly. For the other direction, by
  \cref{lemma:exp-entailment} we need to show  that if  $\expand(G_2, T_B,\lang)
  \models T_H$ for all $T_B\rightarrow T_H\in G_1$ then for any
  ontology $\ont$
\begin{align}
 &\expand(G_2, \ont,\lang) \models g \text{ for all }g \in G_1, \label{cond:1}\\
 &\expand(G_2, \ont,\lang) \models \ont.\label{cond:2}
\end{align}
By \cref{lemma:exp-entailment}, \eqref{cond:1} and \eqref{cond:2} imply
$\expand(G_2,\ont,\lang) \models \expand(G_1,\ont,\lang)$, which is the definition of $G_1$ being $\lang$-contained in $G_2$.
\eqref{cond:2} is an immediate consequence of the definition of the expansion, hence we only need to show \eqref{cond:1}.

In the following we slightly abuse notation: $\expand(G,
\ont,\lang)$ for a GBox $G$, ontology $\ont$ and language $\lang$
shall refer to an ontology as opposed to a set of possible expansions;  by \cref{thm-expansion-equiv}, they are all logically
equivalent.

Let $T_B\rightarrow T_H\in G_1$ be fixed but arbitrary. Furthermore, let $\sigma \in \eval(T_B, \expand(G_2,\ont,\lang))$.

 Then, by the definition of $\eval$, 
\begin{align}
\expand(G_2, \ont,\lang) \models T_B\sigma. \tag{*}\label{cond:star}
\end{align} 
Applying \cref{lem:twentynine} to \eqref{cond:star} yields
 $\expand(G_2, \expand(G_2, \ont,\lang),\lang) \models \expand(G_2,
 T_B\sigma,\lang)$. But $\expand(G_2, \expand(G_2,\ont,\lang),\lang) =
 \expand(G_2,\ont,\lang)$ (otherwise $\expand(G_2,\ont,\lang)$ would not be an expansion) and hence

\begin{align}
\expand(G_2,\ont,\lang) \models \expand(G_2, T_B\sigma,\lang).\label{cond:4}
\end{align} 
Thus what remains is to show that 

\begin{align}
 \expand(G_2, T_B\sigma,\lang) \models T_H\sigma,\label{cond:5}
\end{align}
 since \eqref{cond:4} and \eqref{cond:5} together yield 

\begin{align}
\expand(G_2,\ont,\lang)\models T_H\sigma. \tag{**}\label{cond:starstar}
\end{align} 
  which together with \eqref{cond:star} implies that
  $\expand(G_2,\ont,\lang)$ satisfies $T_B\rightarrow T_H$. 

 Using compositionality of $\lang$-substitutions and the iterative fixpoint
 construction of the expansion, it is straightforward to show that 
\begin{align}
 \expand(G_2, T_B\sigma,\lang) \models \expand(G_2, T_B,\lang)\sigma.\label{cond:7}
\end{align}

 By the assumption of the theorem, $\expand(G_2, T_B,\lang)\models
 T_H$ which in turn implies that $\expand(G_2, T_B,\lang)\sigma
 \models T_H\sigma$. This together with \eqref{cond:7} yields 

\begin{align}
\expand(G_2, T_B\sigma,\lang) \models (\expand(G_2, T_B,\lang)\sigma \models T_H\sigma,
\end{align} 
thus proving \eqref{cond:5} and thereby \eqref{cond:starstar}, as desired.
\end{proof}

It follows that $\lang$-containment is decidable for arbitrary $\lang$
 (even infinite), since we can restrict ourselves to the language of
 all subexpressions of $B(G_1)$. Furthermore,
the complexity is the same as that of computing an expansion of a GBox.

\subsection{GBoxes with negation}\label{sec:negation}

In this section we introduce negation-as-failure to GBoxes. We extend the definition of the
expansions defined in Section~\ref{sec:syntax-semantics}, define suitable notions of semi-positive GBoxes and
semantics for stratified GBoxes, and prove the corresponding
uniqueness results.

To do so, a generator is now a rule of
the form $T_B^+(V_1), \naf T_B^-(V_2)\rightarrow T_H(V_3)$, for
$T_B^+(V_1),  T_B^-(V_2), T_H(V_3)$ templates 
with $V_3 \subseteq V_1\cup V_2$. For the sake of notational simplicity,
we restrict ourselves here to generators with at most one template in the negative body. It is worth noting, however, that all
definitions and results in this section are immediately transferable
to generators with multiple templates in the negative
bodies (multiple templates in the positive body can of course be
simply merged into a single template).

The following definition, together with Definition~\ref{def:expansion}
of $\expand(G,\ont,\lang)$, provides a minimal model semantics for
GBoxes with negation:
\begin{definition}
 An ontology $\ont$ \emph{satisfies a generator $g:T_B^+(V_1), \naf T_B^-(V_2)\rightarrow T_H(V_3)$  wrt. }$\lang$ if, for every $\sigma \in \eval(T_B^+, \ont, \lang) \setminus \eval(T_B^-, \ont, \lang)$ we have $\ont \models T_H\sigma$.
\end{definition}

Unsurprisingly, adding negation
results in the loss of uniqueness of the expansion
$\expand(G,\ont,\lang)$ (cf. Theorem~\ref{thm-expansion-equiv}), as
illustrated by the following example. 
\begin{example}\label{ex:nonunique-stratified}
  Let $\lang=\{\conc{A}, \conc{B}, \conc{C}, \conc{s}\}$,
  $\ont=\{\conc{A}(\conc{s})\}$ and $G=\{\conc{A}(?X), \naf \conc{B}(?X)\rightarrow \conc{C}(?X) \}$.
  Then $\expand(G,\ont,\lang)$ contains the two non-equivalent
  expansions $\{\conc{A}(\conc{s}), \conc{B}(\conc{s})\}$ and $\{\conc{A}(\conc{s}), \conc{C}(\conc{s})\}$.
\end{example}
Next, we extend the definition of the 1-step expansion operator
from Definition~\ref{def:1step-expansion} to support
negation. However, as Example~\ref{ex:fixpoint-nomodel} will show, a
fixpoint does not always correspond to an expansion in
$\expand(G,\ont,\lang)$.

\begin{definition}\label{def:inflationary-operator}
The \emph{1-step expansion of $\ont$ and $G$ in $\lang$} of a GBox $G$ with negation, written
$\oexpand^-(G, \ont, \lang)$, is defined as follows: 
$$\oexpand^-(G, \ont, \lang) = \ont\cup \bigcup_{T_B^+, \naf
  T_B^-\rightarrow T_H \in G}\{ T_H\sigma \mid \sigma \in \eval(T_B^+,
\ont, \lang)\setminus \eval(T_B^-, \ont, \lang)\}.$$
\end{definition}

\begin{example}\label{ex:fixpoint-nomodel}
Consider the ontology $\ont=\{\conc{Single}\sqsubseteq \conc{Person},
\conc{Spouse}\sqsubseteq \conc{Person}, \conc{Single}\sqsubseteq \neg \conc{Spouse},
\conc{Person}(\conc{Maggy})\}$ and the following GBox $G$ 
$$\begin{array}{rrcl}
G=\{&\{\conc{Person}(?X)\},\naf \{\conc{Single}(?X)\}&\rightarrow \{\conc{Spouse}(?X)\},\\
   &\{\conc{Person}(?X)\}, \naf \{\conc{Spouse}(?X)\}&\rightarrow \{\conc{Single}(?X)\}\}
\end{array}$$
The expansion $\expand(G, \ont, \lang)$ contains the two non-equivalent
ontologies $\ont \cup \{\conc{Single}(\conc{Maggy})\}$ and $\ont\cup
\{\conc{Spouse}(\conc{Maggy})\}$. Furthermore, the iterated fixpoint 
$(\oexpand^-)^*(G,\ont,\lang)$ is $\ont \cup \{\conc{Single}(\conc{Maggy}),
\conc{Spouse}(\conc{Maggy})\}$; this is, however, not an ontology in
$\expand(G,\ont,\lang)$ as it is not entailment-minimal.
\end{example}

A natural question arising is whether we can identify or even
characterize GBoxes with negation that have a unique
expansion. To this end, we
define suitable notions of semi-positive GBoxes and stratified
negation. These are based on the notion of multiple templates affecting
others, as formalized next.
\begin{definition}
  Let $\lang$ be a language, $S=\{S_1,\ldots,S_k\}$ a set of
 templates, $\ont$ an ontology, and $T$ a template. We
 say that \emph{$S$ activates $T$ with respect to $\ont$ and $\lang$}  if there exist
$\lang$-substitutions $\sigma_1,\ldots\sigma_k$ such that $\ont\cup \bigcup
S_i\sigma_i\models T\sigma$ for some $\lang$-substitution
$\sigma$. For brevity we omit $\ont$ and $\lang$ if they are clear
from the context.
\end{definition}
In contrast to standard Datalog with negation, the entailment of a
template in the body of a generator is not solely dependent on a single
generator with a corresponding head firing. Instead, multiple generators might
need to fire and interact with $\ont$ in order to entail a body template. Hence we use the set
$S$ of templates in the definition of activation.

\begin{example}
  Consider the GBox containing $g_1\colon T_1(?X)\rightarrow
  \{?X\sqsubseteq \conc{A}\}, g_2\colon T_2(?Y)\rightarrow \{?Y\sqsubseteq
  \conc{B}\}$ and $g_3\colon \naf \{?Z\sqsubseteq \conc{A} \sqcap \conc{B}\}\rightarrow T_3(?Z)$.
 Then $H(g_1)$ and $H(g_2)$ activate $\{?Z\sqsubseteq
 \conc{A}\sqcap \conc{B}\}$ with respect to any $\ont$ and $\lang$,
 indicating that the firing of $g_3$ depends on the combined firing of
 $g_1$ and $g_2$.
\end{example}

Activation can then be used to define a notion of semi-positive
GBoxes, which is analogous to semi-positive Datalog programs.
\begin{definition}[Semi-positive GBoxes]
Let $G$ be a GBox with negation, $\lang$ a
language, and $\ont$ an ontology. $G$ is called \emph{semi-positive
  w.r.t. $\ont$ and $\lang$} if no negative body template
$T_B^-$ of a generator $g\in G$ is activated by
$H(G)$.
\end{definition}

As seen in \cref{ex:nonunique-stratified}, even semi-positive GBoxes 
result in multiple non-equivalent expansions. In that example, neither
the ontology $\ont$ nor any possible firing of $G$ can yield
$B(s)$. As such, we wish to restrict the theories in
$\expand(G,\ont,\lang)$ to containing only facts derivable from $\ont$
and $G$. To that end, the following definition
suitably restricts the entailment of expansions. 

\begin{definition}\label{def:justifiable}
Let $G$ be a GBox, $\ont$ an ontology, and $\lang$ a finite
language. We say that an expansion $\ont'\in \expand(G,\ont,\lang)$ is
\emph{justifiable w.r.t. $(G,\ont,\lang)$} if the
following holds: if $\ont'\models T\sigma$ for some template $T$ and
substitution $\sigma$, then $\ont\models T\sigma$ or $H(G)$ activates
$T\sigma$ with respect to $\ont$ and $\lang$. We write simply $\ont'$
is \emph{justifiable} when $G$, $\ont$, and $\lang$ are clear from the context.
\end{definition}
Using this notion, we can show that, indeed,a GBox being semi-positive
implies that its semantics is unambiguous when restricted to
justifiable expansions.  

\begin{theorem}\label{thm:semipositive}
Let $G$ be a semi-positive GBox, $\ont$ an ontology, and $\lang$ a
finite language. Then the fixpoint $(\oexpand^-)^*(G,\ont,\lang)$ exists,
is the unique fixpoint of $\oexpand^-$, and is contained in $\expand(G,\ont,\lang)$.
\end{theorem}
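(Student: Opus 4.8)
The plan is to mimic the classical Datalog argument that semi-positive programs have a unique fixpoint, adapting it to the entailment-based setting. First, I would establish that $\oexpand^-$ is \emph{monotone} on the relevant lattice when $G$ is semi-positive. The subtlety is that $\oexpand^-$ is in general non-monotone because of the $\setminus \eval(T_B^-,\cdot,\lang)$ term: enlarging $\ont$ can only enlarge $\eval(T_B^-,\ont,\lang)$, hence shrink the set of firing substitutions. The key observation is that, by semi-positivity, no negative body template $T_B^-$ is activated by $H(G)$; so along the iteration starting from $\ont$, the set $\eval(T_B^-, (\oexpand^-)^n(G,\ont,\lang), \lang)$ never grows beyond $\eval(T_B^-,\ont,\lang)$ — any substitution $\sigma$ with $(\oexpand^-)^n(G,\ont,\lang)\models T_B^-\sigma$ that is not already witnessed by $\ont$ would exhibit $H(G)$ activating $T_B^-$, a contradiction. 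Concretely I would prove by induction on $n$ that $\eval(T_B^-,(\oexpand^-)^n(G,\ont,\lang),\lang) = \eval(T_B^-,\ont,\lang)$ for every negative body template $T_B^-$ in $G$. Granting this, the negative guards behave like a \emph{fixed} set of ground exceptions computed once from $\ont$, and $\oexpand^-$ restricted to iterates of $\ont$ coincides with the monotone operator $\oexpand$ of a positive GBox $G'$ (obtained by deleting, for each generator, the substitutions ruled out by the negative body — formally, the $\lang$-grounding of $G$ with the "bad" instances removed).

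Second, with monotonicity in hand, I would invoke the finiteness argument of Theorem~\ref{expansion-fixpoint}: since $\lang$ is finite, the set $\Sigma_\lang$ of $\lang$-substitutions for variables in $G$ is finite, so $H = \ont \cup \bigcup_{T_B^+,\naf T_B^- \rightarrow T_H \in G,\ \sigma\in\Sigma_\lang} T_H\sigma$ is a finite set of axioms, and $\oexpand^-$ restricted to iterates of $\ont$ is a monotone operator on the finite powerset lattice $\mathcal P(H)$ above $\ont$. Hence the least fixpoint $(\oexpand^-)^*(G,\ont,\lang)$ exists and is reached in finitely many steps. Uniqueness-as-a-fixpoint then follows because any fixpoint $F \supseteq \ont$ built from iterating from $\ont$ satisfies the same stabilized negative-guard equation, so $\oexpand^-$ acts on it as the positive operator, and the least fixpoint of a monotone operator on a finite lattice is its unique fixpoint reachable from the bottom element $\ont$.

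Third, I would show $(\oexpand^-)^*(G,\ont,\lang) \in \expand(G,\ont,\lang)$, i.e. it meets conditions (1)–(3) of Definition~\ref{def:expansion}. Conditions (1) ($\ont' \models \ont$) and (2) ($\ont'$ satisfies every $g\in G$) are immediate from the fixpoint property: if $\sigma\in\eval(T_B^+,\ont',\lang)\setminus\eval(T_B^-,\ont',\lang)$ then $T_H\sigma$ was added, so $\ont'\models T_H\sigma$. For (3), entailment-minimality, suppose $\ont''$ is strictly weaker than $\ont'$ and still satisfies (1),(2); I would argue by induction along the fixpoint construction that $\ont'' \models (\oexpand^-)^n(G,\ont,\lang)$ for all $n$ — the base case is $\ont''\models\ont$, and the inductive step uses that $\ont''$ satisfies each $g$ together with the stabilization lemma to ensure the firing substitutions at stage $n$ are among those $\ont''$ must respect (this is exactly where semi-positivity is needed, so that the negative guards evaluated at $\ont''$ agree with those evaluated at $\ont$). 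Then $\ont'' \models (\oexpand^-)^*(G,\ont,\lang) = \ont'$, contradicting strictness.

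The main obstacle is the stabilization lemma for the negative body templates — that along the run from $\ont$ the evaluation of each $T_B^-$ never changes. Everything else is a routine transfer of the Datalog argument, but this step is where the entailment semantics (as opposed to syntactic matching) and the generalized notion of \emph{activation} by a \emph{set} $H(G)$ of head templates genuinely do the work; one must be careful that a $T_B^-\sigma$ could become entailed not by a single fired head but by several fired heads jointly interacting with $\ont$, which is precisely what the definition of activation via $\ont \cup \bigcup S_i\sigma_i \models T\sigma$ is designed to capture, and why the semi-positivity hypothesis is phrased in terms of $H(G)$ rather than individual generators.
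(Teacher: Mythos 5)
Your proof is correct, and its engine is the same as the paper's: semi-positivity guarantees that no negative body template can become newly entailed along the run, so $\oexpand^-$ behaves like a positive, monotone-inflationary operator and the iteration from $\ont$ stabilizes on the finite lattice determined by $\lang$. Where you diverge is in the minimality argument. The paper does not prove your stabilization lemma for the iterates and then sandwich an arbitrary weaker theory; instead it introduces the auxiliary notion of a \emph{justifiable} expansion (Definition~\ref{def:justifiable}) and shows $\ont'\models \ont^*$ for every justifiable $\ont'\in\expand(G,\ont,\lang)$, using justifiability plus semi-positivity to conclude $\ont'\not\models T_B^-\sigma$. Your route avoids justifiability entirely: for condition (3) you only ever compare $\ont^*$ against theories $\ont''$ with $\ont^*\models\ont''\models\ont$, and this sandwich already forces $\eval(T_B^-,\ont'',\lang)=\eval(T_B^-,\ont,\lang)$, which is all you need for the inductive step. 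That is a cleaner, more self-contained proof of exactly the stated theorem; what it does not yield for free is the paper's corollary that \emph{all} justifiable expansions are pairwise equivalent, which is precisely what the justifiability detour buys. One caveat applies to both proofs: read literally, ``the unique fixpoint of $\oexpand^-$'' is false (in Example~\ref{ex:nonunique-stratified}, $\{\conc{A}(\conc{s}),\conc{B}(\conc{s})\}$ is also a fixpoint of the operator), so your reading of uniqueness as ``the unique fixpoint reached by iterating from $\ont$'' is as defensible as the paper's reading of it as uniqueness among justifiable expansions; neither proof establishes more than that. Finally, the monotonicity detour in your first step is harmless but unnecessary for existence --- inflationarity plus finiteness of $\lang$ already suffices, which is all the paper invokes.
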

\begin{proof}
  Since $\oexpand^-(G,\ont,\lang)$ is an inflationary operator and $L$
  is finite, there exists an iterative fixpoint
  $O^*=(\oexpand^-)^*(G,\ont,\lang)$. By
  construction, $\ont^*$ satisfies $\ont$ and all generators $g\in
  G$ and is justifiable w.r.t. $(G,\ont,\lang)$. We simultaneously prove uniqueness and membership in
  $\expand(G,\ont,\lang)$ by showing that $O'\models O^*$
  for an arbitrary justifiable expansion $O'\in\expand(G,\ont,\lang)$.
   Let $\ont_0=\ont$ and $\ont_i=\oexpand^-(G,\ont_{i-1},\lang)$ for
   $i\geq 1$, then $\ont=\ont_0\subseteq \ldots \subseteq \ont_k=\ont^*$ for
   some $k$. Assume $\ont_1\models T\sigma$ for some
   $\lang$-substitution $\sigma$ and $T\in H(G)$. Then either $\ont\models T\sigma$ (in which case $\ont'\models T\sigma$) or there exists
   a generator $$ T_B^+,\naf T_B^-\rightarrow T$$ such that $\sigma\in
   \eval(T_B^+,\ont, \lang)$ and $\sigma\not\in \eval(T_B^-,\ont,
   \lang)$. Since $G$ is semi-positive, $H(G)$ cannot activate
   $T_B^-$, i.e., there exists no set of generators that, together
   with the ontology $\ont$, could fire in
   a way that would entail $T_B^-\sigma$. Since $\ont'$ is
   entailment-minimal and justifiable, it must be the case that $\ont'\not\models
   T_B^-\sigma$ and hence $\ont'\models T\sigma$. Thus, $\ont'\models
   \ont_1$.

   The same argument can be applied inductively to show that
   $\ont'\models \ont_i$ for $i\geq 1$, thus showing $\ont'\models
   \ont^*$. Since $\ont'$ was chosen arbitrarily, this proves both
   the uniqueness and membership claims.
 \end{proof}
The following is a direct corollary of the proof of Theorem~\ref{thm:semipositive}.
\begin{corollary}
Let $G$ be a semi-positive GBox, $\ont$ and ontology and $\lang$ a
finite language. All justifiable ontologies in $\expand(G,\ont,\lang)$ are
logically equivalent.
\end{corollary}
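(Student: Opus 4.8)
The plan is to read this conclusion directly off the argument already carried out for Theorem~\ref{thm:semipositive}, so the proof will be very short. Recall that in that proof we set $O^* = (\oexpand^-)^*(G,\ont,\lang)$ and established: $O^*$ exists (since $\oexpand^-$ is inflationary and $\lang$ is finite), $O^* \in \expand(G,\ont,\lang)$, that $O^*$ is justifiable w.r.t.\ $(G,\ont,\lang)$ by construction, and --- crucially --- that $O' \models O^*$ for every justifiable expansion $O' \in \expand(G,\ont,\lang)$. The corollary will just repackage the last of these together with entailment-minimality.

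First I would fix two arbitrary justifiable expansions $O_1, O_2 \in \expand(G,\ont,\lang)$ and apply the above to each, obtaining $O_1 \models O^*$ and $O_2 \models O^*$. Since logical equivalence is symmetric and transitive, it then suffices to show $O^* \equiv O_i$ for $i=1,2$, and for this only the reverse entailments $O^* \models O_i$ remain. Here I would invoke clause~(3) of Definition~\ref{def:expansion}: because $O^* \in \expand(G,\ont,\lang)$, it in particular satisfies clauses~(1) and (2), i.e.\ $O^* \models \ont$ and $O^*$ satisfies every $g \in G$. Together with $O_i \models O^*$, this exhibits $O^*$ as a theory weaker than or equal to $O_i$ that still meets (1) and (2); were $O^*$ strictly weaker than $O_i$, this would contradict the entailment-minimality of $O_i$. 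Hence $O^* \equiv O_i$, and therefore $O_1 \equiv O_2$.

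The only point needing a moment's care is that $O^*$ genuinely qualifies as a witness against the minimality of $O_i$ --- that is, that $O^*$ satisfies conditions (1) and (2) of Definition~\ref{def:expansion} --- but this is immediate from $O^* \in \expand(G,\ont,\lang)$ as already established in the theorem. So I expect no real obstacle: the substantive work ($O' \models O^*$ for all justifiable $O'$) was done in proving Theorem~\ref{thm:semipositive}, and this corollary simply combines that half of the argument with entailment-minimality of the expansions.
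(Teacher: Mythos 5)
Your proposal is correct and is exactly the argument the paper intends: the paper gives no separate proof, stating only that the corollary follows directly from the proof of Theorem~\ref{thm:semipositive}, and you have spelled out that derivation faithfully --- using the fact established there that every justifiable expansion $O'$ satisfies $O'\models O^*$, together with $O^*\in\expand(G,\ont,\lang)$ and the entailment-minimality clause of Definition~\ref{def:expansion} to get the reverse entailment, hence $O_1\equiv O^*\equiv O_2$. No gaps.
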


For a GBox to be  semi-positive is a very strong requirement. Next, we
introduce the notion of a stratified GBox: this does not ensure that
all expansions are equivalent, but it ensures that we can determine
one of its expansions by expanding strata in the right order.  Again,
we use $H(G)$ to denote the set of templates in heads of generators
in $G$, and $B(G)$ for the set of templates in (positive or
negative) bodies of generators in $G$. 

\begin{definition}[Stratification]\label{def:stratification}
Let $\lang$ be a language and $\ont$ an ontology. A GBox  $G$ 
is \emph{stratifiable w.r.t. $\ont$ and $\lang$} if there exists a
function $v: H(G)\cup B(G) \rightarrow \mathbb N$ such that, for every
generator $T_B^+, \naf T_B^-\rightarrow T_H\in G$ the following
 holds:

\begin{enumerate}
\item $v(T_H)\geq v(T_B^+)$,  
\item $v(T_H)> v(T_B^-)$, 
\item  for every $\subseteq$-minimal  $S_1 \subseteq H(G)$ that activates $T_B^+$,
$v(T^+_B)\geq \max\limits_{S'\in S_1} v(S')$,
\item for every $\subseteq$-minimal  $S_2 \subseteq H(G)$ that activates $T_B^-$,
$v(T^-_B)> \max\limits_{S'\in S_2} v(S')$.
\end{enumerate}
\end{definition}

The first two conditions in the previous definition are analogous to
stratified Datalog, which intuitively states that a body literal must be
evaluated (strictly, in the case of negative literals) before head
literals.
The second two conditions tailor the stratification to generators: generators allow
for more interaction amongst their components. As opposed to Datalog,
multiple heads combined might be needed to entail a body
template. Thus, a body template must be defined in a higher stratum
than any possible set of templates that could entail it.

Following this definition, a stratification $v$ of a GBox $G$
w.r.t. an ontology $\ont$ gives rise to a partition $G_v^1,\ldots G_v^k$
of $G$, where each generator $g:T_B^+, \naf T_B^-\rightarrow
T_H$ is in the stratum $G_v^{v(T_H)}$.

For a GBox $G$, an ontology $\ont$ and a language $\lang$, we can define the \emph{precedence graph} $\mathcal
G_{G, \ont,\lang}$ as follows: nodes are the templates occuring in $G$ and
\begin{enumerate}
\item if $T_B^+,\naf T_B^-\rightarrow T_H$ is in $G$, then $\mathcal
  G_{G,\ont,\lang}$ contains the positive edge $(T_B^+,T_H)$ and the negative edge
  $(T_B^-,T_H)$;
\item for a template $T$ that occurs in the positive (resp. negative) body of a
  generator and any $\subseteq$-minimal set
  $\{S_1,\ldots,S_k\}\subseteq H(G)$ that activates $T$ w.r.t. $\ont$
  and $\lang$, $\mathcal G_{G,\ont,\lang}$ contains the positive (resp. negative) edges $(S_i,T)$ for $1\leq i
  \leq k$.
 \end{enumerate}
We then get the following classification of stratified GBoxes, the
proof of which is entirely analogous to the Datalog case.

\begin{proposition}
Let $\lang$ be a language and $\ont$ an ontology. A GBox $G$ is
stratifiable w.r.t. $\ont$ and $\lang$
iff its precedence graph $\mathcal G_{G,\ont,\lang}$ has no cycle
with a negative edge.
\end{proposition}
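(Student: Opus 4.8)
The plan is to mirror the classical Datalog stratification argument, adjusting it to the extra edges of the precedence graph. First I would record that $\mathcal G_{G,\ont,\lang}$ is a \emph{finite} graph: the set $H(G)\cup B(G)$ of templates occurring in $G$ is finite; each generator contributes exactly one positive and one negative generator-induced edge; and for each body template $T$ there are only finitely many subsets of the finite set $H(G)$, hence finitely many $\subseteq$-minimal sets activating $T$ and finitely many activation-induced edges. Throughout, $\ont$ and $\lang$ stay fixed, so ``activates'' and the graph itself are fixed.

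For the direction ``no cycle with a negative edge $\Rightarrow$ stratifiable'', I would build the stratification $v$ directly from the graph: set $v(T)$ to be the maximum, over all directed walks ending at $T$, of the number of negative edges on that walk. The crucial observation is that this maximum is a genuine natural number: if a walk visits some vertex twice, the enclosed closed sub-walk is a cycle and therefore, by hypothesis, contains no negative edge, so excising it does not lower the negative-edge count; iterating, every walk is dominated by a simple path, and there are only finitely many simple paths. Once $v$ is well defined, conditions (1)--(4) of \Cref{def:stratification} all follow by ``extend a witnessing walk along one more edge'': a walk realising $v(T_B^+)$ extends along the generator-induced positive edge $(T_B^+,T_H)$ to a walk ending at $T_H$ with the same count, giving $v(T_H)\ge v(T_B^+)$; extending instead along the negative edge $(T_B^-,T_H)$ adds one negative edge, giving $v(T_H)>v(T_B^-)$; and for each $S'$ in a $\subseteq$-minimal set activating $T_B^+$ (resp.\ $T_B^-$), the activation-induced positive (resp.\ negative) edge $(S',T_B^+)$ (resp.\ $(S',T_B^-)$) gives $v(T_B^+)\ge v(S')$ (resp.\ $v(T_B^-)>v(S')$), which together yield conditions (3) and (4).

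For the converse, assume $v$ is a stratification and, towards a contradiction, that $\mathcal G_{G,\ont,\lang}$ has a cycle $T_0\to T_1\to\cdots\to T_m=T_0$ using at least one negative edge. I would show that along every edge $(A,B)$ of the graph we have $v(B)\ge v(A)$, with strict inequality whenever the edge is negative, by a case split on the edge's origin: the generator-induced edges are covered by conditions (1) and (2), while an activation-induced edge $(S',T)$, coming from a $\subseteq$-minimal set activating the body template $T$, is covered by condition (3) if it is positive and by condition (4) if it is negative (here one uses that $T$ is then literally the positive, resp.\ negative, body of the generator in question, so those conditions apply). Chaining these inequalities around the cycle forces $v(T_0)\le v(T_1)\le\cdots\le v(T_m)=v(T_0)$, hence equality throughout, contradicting the strict increase across the negative edge.

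I expect the only genuine subtlety to be the well-definedness of $v$ in the forward direction --- that the maximum number of negative edges on a walk ending at $T$ is finite --- which is exactly where ``no cycle with a negative edge'' is consumed, via the walk-to-simple-path reduction. Everything else is a faithful transcription of the Datalog argument, with conditions (3)--(4) and the activation-induced edges supplying the extra bookkeeping forced by the fact that a body template may be entailed only by several generator heads acting jointly with $\ont$.
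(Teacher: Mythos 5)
Your proof is correct and is exactly the standard Datalog stratification argument that the paper itself invokes without writing out (it only remarks that the proof is ``entirely analogous to the Datalog case''); your adaptation of the walk-counting function $v$ and the edge-by-edge case split to the activation-induced edges is the right bookkeeping, and the well-definedness argument via excising negative-edge-free closed sub-walks is precisely where the hypothesis is used. No gaps; you have simply supplied the details the paper omits.
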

Given such a
stratification, we can thus define a semantics for stratified negation.

\begin{definition}[Stratified semantics] 
Let $\ont$ be an ontology, $\lang$ a language, and  $G$ a GBox
stratifiable w.r.t. $\ont$ and $\lang$. For a
stratification $v$ of $G$ and the induced partition $G_v^1,\ldots,
G_v^k$ of $G$, we define $\ont_v^{\mathsf{strat}}(G,\ont,\lang)$ as follows:
\begin{enumerate}
\item $\ont_v^1=\ont$,
\item $\ont_v^j=\oexpand^*(G^{j-1},O^{j-1},\lang)$ for $1<j\leq k$,
\item $\ont_v^{\mathsf{strat}}(G,\ont,\lang)=O_v^k$.
\end{enumerate}
\end{definition}
\begin{theorem}
Let $\ont$ be an ontology, $\lang$ a finite language, and $G$ be a GBox
stratifiable w.r.t. $\ont$ and $\lang$. Then $\ont_v^{\mathsf{strat}}(G,\ont,\lang)$ exists, is independent of the choice of $v$, and contained in $\expand(G,\ont,\lang)$.
\end{theorem}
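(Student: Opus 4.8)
The plan is to fix a stratification $v$ of $G$ with induced partition $G_v^1,\dots,G_v^k$ and argue stratum by stratum, reusing the semi-positive machinery of Theorem~\ref{thm:semipositive} \emph{inside} each stratum. Existence is the mild part. By induction on $j$: once $\ont_v^{j-1}$ is a well-defined finite theory, the set of all instantiations of heads of generators in $G_v^{j-1}$ is finite (as $\lang$ is finite), the one-step operator of Definition~\ref{def:inflationary-operator} restricted to $G_v^{j-1}$ is inflationary on the corresponding finite powerset, and hence its iteration stabilises; so $\ont_v^{j}=\oexpand^*(G_v^{j-1},\ont_v^{j-1},\lang)$ is well-defined, and therefore so is $\ont_v^{\mathsf{strat}}(G,\ont,\lang)$.

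The substance is (i) $\ont_v^{\mathsf{strat}}\in\expand(G,\ont,\lang)$ and (ii) independence of $v$. For (i) I would check the three conditions of Definition~\ref{def:expansion}. Condition~(1), $\ont_v^{\mathsf{strat}}\models\ont$, is immediate since every step only adds axioms. The heart of condition~(2) --- that $\ont_v^{\mathsf{strat}}$ satisfies every generator $g\colon T_B^+,\naf T_B^-\rightarrow T_H\in G$ --- is a \emph{freezing lemma}: once the construction has processed the stratum $v(T_H)$ of $g$, the evaluations $\eval(T_B^+,\cdot,\lang)$ and $\eval(T_B^-,\cdot,\lang)$ no longer change. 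The reason is that $v(T_B^+)\le v(T_H)$ and, by condition~3 of Definition~\ref{def:stratification}, no head template of a strictly higher stratum occurs in a $\subseteq$-minimal set activating $T_B^+$; by conditions~2 and~4 the analogous, strict statement holds for $T_B^-$; and every later stratum only adds instances of head templates of strictly higher strata. Hence the firing condition of $g$ is stable from stratum $v(T_H)$ on, and since the stratum-$v(T_H)$ fixpoint satisfies $g$ by construction of $\oexpand^*$, so does $\ont_v^{\mathsf{strat}}$; the same reasoning also shows $\ont_v^{\mathsf{strat}}$ is justifiable in the sense of Definition~\ref{def:justifiable}. For condition~(3), entailment-minimality, let $O'$ satisfy conditions~(1) and~(2) with $\ont_v^{\mathsf{strat}}\models O'$; I would prove $O'\models\ont_v^j$ for every $j$ by induction on $j$ (base: $O'\models\ont=\ont_v^1$), and inside the step by a nested induction through the $\oexpand^-$-iteration that produces $\ont_v^{j+1}$: each newly added axiom $T_H\sigma$ comes from a generator of $G_v^j$ that fired for $\sigma$, so the current iterate --- hence $O'$ --- entails $T_B^+\sigma$, whereas the iterate does not entail $T_B^-\sigma$, so by the freezing lemma neither does $\ont_v^{\mathsf{strat}}$, hence neither does the weaker $O'$; as $O'$ satisfies $g$ we conclude $O'\models T_H\sigma$. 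This is exactly the argument in the proof of Theorem~\ref{thm:semipositive}, localised to a single stratum. Iterating gives $O'\models\ont_v^{\mathsf{strat}}$, so $\ont_v^{\mathsf{strat}}$ is entailment-minimal and thus an expansion --- in fact the unique \emph{justifiable} one.

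For (ii) I see two routes. The direct one mirrors the classical Datalog argument: two stratifications of $G$ w.r.t. $\ont$ and $\lang$ induce the \emph{same} precedence graph $\mathcal G_{G,\ont,\lang}$, hence the same strongly connected components; one then checks that merging two adjacent strata that are not separated by a negative edge, and conversely refining a stratum into a topological order of its components, leaves $\oexpand^*$ unchanged --- routine from monotonicity of $\oexpand$ on positive fragments plus the freezing lemma --- so every stratification computes the value attached to the canonical component-wise stratification. The slicker route reuses (i): the induction there shows that any justifiable expansion $O'$ satisfies $O'\models\ont_v^{\mathsf{strat}}$, and $\ont_v^{\mathsf{strat}}\models O'$ follows from $O'$ being an expansion together with the freezing lemma; so $\ont_v^{\mathsf{strat}}$ is \emph{the} justifiable expansion, and this object plainly does not depend on $v$.

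The main obstacle is the freezing lemma. In Datalog the corresponding fact --- that a predicate of a lower stratum is fully determined before a higher stratum is evaluated --- is essentially built into the definition, since each predicate is defined by its own rules. Here a body template can be entailed only by a \emph{combination} of instantiated heads interacting with $\ont$, so the stabilisation argument must be routed through $\subseteq$-minimal activating sets and conditions~3--4 of Definition~\ref{def:stratification}; the delicate point is to rule out that a \emph{non-}minimal activating combination, one involving a head template of a higher stratum, produces a genuinely new entailment of a lower body template. Once that is pinned down, everything else is bookkeeping plus direct reuse of Theorem~\ref{thm:semipositive}.
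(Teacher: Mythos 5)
Your proposal follows essentially the same route as the paper's own proof: each stratum is treated as a semi-positive GBox so that existence and the within-stratum satisfaction/minimality arguments reduce to Theorem~\ref{thm:semipositive}, and independence of $v$ is deferred to the classical Datalog argument on the strongly connected components of the precedence graph. The paper's proof is in fact far terser than yours---it hides your ``freezing lemma'' inside the phrase ``by construction''---and the delicate point you flag (whether a non-minimal activating set containing a higher-stratum head can produce a genuinely new entailment of a lower-stratum body template) is a real subtlety that the paper does not address either, so your write-up is, if anything, more explicit about where the work lies.
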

\begin{proof}

Let $G_v^1,\ldots, G_v^k$ be the partitioning of $G$ w.r.t. to a
stratification $v$. By Definition~\ref{def:stratification}, each
$G_v^i$ is a semi-positive GBox. Hence Theorem~\ref{thm:semipositive}
guarantees the existence of
$\ont_v^{\mathsf{strat}}(G,\ont,\lang)$. By construction
$\ont_v^{\mathsf{strat}}(G,\ont,\lang)$ satisfies $\ont$ and all
generators in $G$. Furthermore, there cannot exist an ontology $\ont'$
such that $\ont_v^{\mathsf{strat}}(G,\ont,\lang)\models \ont'$
satisfying $\ont$ and all generators in $G$, as this would contradict
the entailment-minimality of the $\ont_v^i$.

The proof for the independence of the stratification $v$ is entirely
analogous to the Datalog case: the strongly connected components of
$\mathcal G_{G,\ont,\lang}$ provide the most granular stratification, which can
then be used to prove the equivalence of all stratifications
(cf. \cite{DBLP:books/aw/AbiteboulHV95} for a proof for stratified Datalog).
\end{proof}

\begin{remark}\label{rem:unique-nonstratified}
It is worth noting that, although the stratified semantics provides a
unique model, stratified GBoxes do not necessarily have a unique
expansion. For example, the GBox from
Example~\ref{ex:nonunique-stratified} is
stratifiable yet has multiple distinct expansions. Moreover, just as in
Datalog, there exist nonstratified GBoxes that have a unique
expansion.
\end{remark}




\section{Related work}
When combining rules with DL ontologies, the focus has thus far primarily been on (1) encoding ontology axioms in rules for efficient query answering and (2) expanding the expressivity of ontologies using rules. In contrast, GBoxes are designed as a tool for ontology specification by describing instantiation dependencies between templates.

\textbf{Datalog$^{\pm}$} \cite{DBLP:conf/datalog/CaliGLP10} falls into
the first category: it provides a formalism for unifying ontologies
and relational structures. Datalog$^{\pm}$ captures ontology
axioms as rules, and these cannot ``add'' new axioms. 

\textbf{dl-programs} \cite{EITER20081495} and \textbf{DL-safe
  rules} \cite{MOTIK200541} fall into the second category: dl-programs add nonmonotonic reasoning by means of stable model semantics, whereas DL-safe rules allow for axiom-like rules not expressible in standard DL. However, none of these formalisms adds new TBox axioms to the ontology.

\textbf{Tawny-OWL}\footnote{\url{https://github.com/phillord/tawny-owl}} and the \textbf{Ontology Pre-Processing Language}\footnote{\url{http://oppl2.sourceforge.net/index.html}} (OPPL) are formalism for manipulating OWL ontologies \cite{DBLP:conf/owled/Lord13,DBLP:conf/owled/EganaSA08}.  While OPPL was designed to capture patterns and regularities in ontologies, Tawny-OWL is a more general programmatic environment for authoring ontologies that includes powerful  support for ontology design patterns. It is part of future work to see whether GBoxes can be faithfully implemented in Tawny-OWL (OPPL lacks the recursion required).

Another question is whether \textbf{metamodeling} in DL, in particular the encoding scheme from \cite{DBLP:conf/semweb/GlimmRV10} can be faithfully captured by (an extension of) GBoxes: this would require \emph{replacing} axioms in $\ont$ with others which is currently not supported.


\label{ODP}
\textbf{Ontology Design Patterns} (ODPs) have been proposed to capture best practices for developing ontologies \cite{DBLP:conf/semweb/Gangemi05,DBLP:conf/iceis/BlomqvistS05},  inspired by Software Design Patterns. While some ODPs are easily expressible in GBoxes,  it is part of ongoing work to investigate extensions required to capture others. 


\textbf{Reasonable Ontology Templates}\footnote{\url{http://ottr.xyz}}
(OTTR) \cite{OTTR-ISWC18,dl-templates} provide a framework for
macros in OWL ontologies, based on the notion of templates. In contrast to GBoxes, ``matching'' of templates is defined syntacically and non-recursively, but they can be named and composed to give rise to more complex templates. 


The \textbf{Generalized Distributed Ontology, Modelling and
  Specification Language} (GDOL) \cite{KriegBrckner2017GenericOA} is a
formalism facilitating the template-based construction of ontologies
from a wide range of logics. In addition to
concepts, roles, and individuals, parameters may be ontologies
which act as preconditions for template instantiation: for a given
substitution, the resulting parameter ontology must be satisfiable in
order to instantiate the template. Thus these preconditions serve only
as a means to restrict the set of allowed instantiations of a
template, whereas in GBoxes, an ontology triggers 
such substitutions.


\section{Future work}

We have presented first results about a template-based language for capturing recurring ontology patterns and using these to specify larger ontologies. Here, we list some areas that we would like to investigate in the future.

\paragraph{Finite representability} In general, the semantics of GBoxes is such that the expansion of a GBox and ontology can  be infinite if the substitution range given by $\lang$ is infinite. A natural question arising is whether/which other mechanisms can ensure that \emph{some} expansion is finite, and how can we compute such a finite expansion? 
%
%
%
Furthermore, given $G,\ont,\lang$, when can we decide whether an ontology in $\expand(G, \ont, \lang)$ is finite?


\paragraph{Controlling substitutions} So far, we have only considered 
 entailment for generators when determining matching substitutions.  Consider the ontology $\ont=\{\conc{A}
\sqsubseteq \conc{B}, \conc{B}\sqsubseteq \conc{C}\}$ and the template
$?X\sqsubseteq C$. The resulting substitutions include  concepts $\conc{A}$  and $\conc{B}$, but also 
a multitude of possibly unwanted, redundant concepts, e.g.,
$\{\conc{A}\sqcap \conc{A}, \conc{A}\sqcap \conc{B},
 \ldots\}$. 
Hence restricting substitutions to ``reasonable'' or possible ``parametrizable'' (e.g., maximally general) ones is part of future work.  

\paragraph{Entailment problems for ontologies with Gboxes} 



The expansion of a Gbox over an ontology is itself an ontology and can be used as such for standard reasoning tasks. A question of interest is whether/how reasoning on the input ontology and GBox directly, without computing an expansion, can improve reasoning efficiency.

Furthermore, there are plenty of reasoning tasks about GBoxes which naturally reduce to reasoning tasks over ontologies. For example, checking whether a single generator $g \colon T_B \rightarrow T_H$ always leads to inconsistency is equivalent to checking whether $T_B \cup T_H$ is inconsistent. This generalizes to similar questions over entire GBoxes: To check whether there exists an ontology $\ont$ such that every generator $g$ in a GBox $G$ fires, it suffices to check that the union of the generators' bodies is consistent.

However, there are also global properties of Gboxes that do not reduce to individual templates. For example, do two GBoxes $G_1$ and $G_2$ specify equivalent ontologies? While \Cref{sec:containment} contains some results about such problems, we believe there is more to do here.

\paragraph{Extensions to generators} Another area of future work is motivated by our preliminary analysis
of \emph{logical} ontology design patterns \cite{gangemi2009ontology}. We found that a number of rather
straightforward, seemingly useful such pattern require some form of
ellipses and/or maximality. Consider, for example, the role closure
pattern on the role $\conc{hasTopping}$: if $\ont$ entails that
$\conc{MyPizza}\sqsubseteq \exists\conc{hasTopping}.X_1\sqcap \ldots
\exists\conc{hasTopping}.X_n$ and $n$ is maximal for pairwise incomparable $X_i$, then
we would like to automatically add  $\conc{MyPizza}\sqsubseteq
\forall\conc{hasTopping}.(X_1\sqcup \ldots \sqcup X_n)$. Extending
generators to capture some form of ellipses or unknown number of
variables and maximality conditions on substitutions for variables
will be part of future work. 

For GBoxes to be indeed intention revealing, we will also support
named generators and named sets of axioms in the body or the head of
generators, as in OTTR \cite{OTTR-ISWC18}.


\bibliographystyle{plain}
\bibliography{references}
\end{document}